%% file: neurips_2026.tex
\documentclass{article}

\usepackage[preprint]{neurips_2026}

\usepackage[utf8]{inputenc} 
\usepackage[T1]{fontenc}    
\usepackage{hyperref}       
\usepackage{url}            
\usepackage{booktabs}       
\usepackage{amsfonts}       
\usepackage{nicefrac}       
\usepackage{microtype}      
\usepackage{xcolor}         

\usepackage{multirow}
\usepackage{algorithm}
\usepackage{algorithmic}
\usepackage{amsmath,amssymb,amsthm}
\usepackage{wrapfig}
\usepackage{caption}
\usepackage{tikz}
\usetikzlibrary{arrows.meta, positioning}

\newtheorem{proposition}{Proposition}

\usepackage{makecell}
\usepackage{graphicx}
\usepackage{rotating}
\usepackage{adjustbox}
\usepackage[table]{xcolor}
\definecolor{secondbest}{RGB}{0,0,255}
\definecolor{darkgreen}{RGB}{0, 100, 0}
\newcommand{\best}[1]{\textbf{#1}}

\title{PnP-CF: A Consensus Flow-Matching Solver for General Inverse Problems}

\author{%
  Dibyanshu Kumar \\
  TH Würzburg-Schweinfurt, CAIRO\\
  University of W\"{u}rzburg \\
  \texttt{dibyanshu.kumar@thws.de} \\
  \And
  Magda Gregorov\'{a} \\
  TH Würzburg-Schweinfurt, CAIRO\\
  \texttt{magda.gregorova@thws.de} \\
}

\input{custom_commands}

\begin{document}

\maketitle

\begin{abstract}
Plug-and-Play (PnP) methods using continuous-time deep generative models as learned data distribution priors enable synthesizing high-quality solutions of model-based inverse problems.
The quality of the solutions depends critically on the interplay and consensus between the inverse problem solver and the generative model sampler.
We revisit a recently introduced flow-based PnP framework, which we refer to as PnP-CF (\emph{Consensus Flow}), and extend its application to nonlinear inverse problems, including those with learned neural-network forward operators.
PnP-CF combines a flow-matching denoiser with (i) ADMM-like dual variable updates and (ii) a time-adaptive pullback onto the intermediate flow probability path that combines a stochastic and a deterministic component.
We discuss how these two features stabilize the algorithm and promote consensus, and demonstrate on a set of experiments that PnP-CF not only solves standard linear inverse problems with quality at least comparable to state-of-the-art baselines, but also extends gracefully to nonlinear inverse problems, including forward operators realized by pretrained neural networks, for which existing methods produce unsatisfactory results or fail completely.

\end{abstract}

\input{introduction.tex}

\input{background.tex}

\input{methods.tex}

\input{experiments.tex}

\section{Conclusion and Limitations}
\label{conclusion}


We extended the flow-based PnP framework of \citet{kumarpnpadmm}, here termed PnP-CF, to nonlinear inverse problems, including forward operators realized by pretrained neural networks.
Its two key mechanisms, the ADMM-like dual variable and the time-adaptive pullback onto the flow probability path, promote consensus between the forward and prior-consistency updates, and our experiments show that this consensus carries over to nonlinear forward-consistency problems.
Across linear and nonlinear tasks, PnP-CF achieves strong reconstruction quality and measurement consistency while retaining the low runtime and memory footprint of PnP-style methods; on nonlinear problems in particular, none of the evaluated baselines matches its performance consistently across tasks.
Segmentation-guided reconstruction further shows that PnP-CF can handle highly non-injective learned forward operators beyond standard analytical measurement models.
A limitation of the current work is the lack of rigorous convergence guarantees.
Although the primal residuals provide strong empirical evidence of convergence, formal analysis remains challenging because the learned flow denoiser is implicit, nonlinear, and nonconvex.
The current formulation also assumes differentiable forward operators, since forward consistency relies on gradient based optimization.
Extending PnP-CF to nondifferentiable forward models is an important direction for future work.

\bibliographystyle{plainnat}
\bibliography{references}

\clearpage
\appendix

\input{appendix_main.tex}



\end{document}

%% file: custom_commands.tex
\usepackage{amsmath,amsfonts,bm}

\def\eqref#1{equation~\ref{#1}}

\def\1{\bm{1}}

\def\rvepsilon{{\mathbf{\epsilon}}}

\def\rvb{{\mathbf{b}}}

\def\rvu{{\mathbf{i}}}

\def\rvr{{\mathbf{r}}}

\def\rvu{{\mathbf{u}}}
\def\rvv{{\mathbf{v}}}

\def\rvx{{\mathbf{x}}}
\def\rvy{{\mathbf{y}}}
\def\rvz{{\mathbf{z}}}

\def\vtheta{{\bm{\theta}}}

\def\mI{{\bm{I}}}

\DeclareMathAlphabet{\mathsfit}{\encodingdefault}{\sfdefault}{m}{sl}
\SetMathAlphabet{\mathsfit}{bold}{\encodingdefault}{\sfdefault}{bx}{n}



%% file: introduction.tex
\section{Introduction}
\label{intro}

A fundamental task in model-based inverse problems is recovering an unknown signal
$\rvx \in \mathbb{R}^m$ from measurements $\rvy \in \mathbb{R}^n$
satisfying a (noisy) forward model
\begin{equation}
\rvy = \mathcal{A}(\rvx,\boldsymbol{\eta}),
\end{equation}
where $\mathcal{A}$ denotes the measurement process and $\boldsymbol{\eta}$ represents measurement noise.
The forward model may be linear or nonlinear and is typically non-injective, so recovering $\rvx$ from $\rvy$ is ill-posed and requires prior information about the unknown signal.

In a Bayesian formulation, this prior information is modeled by a density $p(\rvx)$ over the signal space, while the forward model induces the likelihood $p(\rvy \mid \rvx)$.
A reconstruction can then be obtained either by sampling from the posterior $p(\rvx \mid \rvy)$, e.g., by training conditional score-based diffusion models \citep{batzolis2021conditional,saharia2022image}, or through maximum-a-posteriori (MAP) estimation. 
The latter yields the minimization problem
\begin{equation}
\hat{\rvx} = \operatorname*{arg\,min}_{\rvx}
\left\{ -\log p(\rvy \mid \rvx) - \log p(\rvx) \right\} =
\operatorname*{arg\,min}_{\rvx}
\left\{ l(\rvy;\rvx) + s(\rvx) \right\},
\label{mapequation}
\end{equation}
where $l(\rvy;\rvx)=-\log p(\rvy\mid\rvx)$ measures consistency with the
forward model and $s(\rvx)=-\log p(\rvx)$ encodes the prior.
This formulation allows a single unconditional generative model $p(\rvx)$ to serve as a reusable prior across multiple inverse problems and forward models.
We refer to $l(\rvy;\rvx)$ and $s(\rvx)$ as the
\emph{forward-consistency} and \emph{prior-consistency} terms, respectively.
\footnote{A shorter version of this paper is accepted at the PRIGM Workshop at NeurIPS 2026.}


In this paper we revisit the flow-based PnP framework introduced by \citet{kumarpnpadmm} for solving the MAP problem in \eqref{mapequation}, which we refer to as PnP-CF (\emph{Plug-and-Play Consensus Flow}), and extend its application to nonlinear inverse problems, including those involving learned neural-network forward operators.
The algorithmic formulation of PnP-CF, recalled in section \ref{methods}, follows \citet{kumarpnpadmm}; our contributions lie in its extension to and evaluation on nonlinear and learned forward models.

PnP-CF decouples the forward-consistency and prior-consistency updates within an iterative scheme, similar to the PnP formulation of \citet{venkatakrishnan2013plug}, while realizing the prior-consistency update through a pretrained flow-matching denoiser, avoiding backpropagation through the generative prior.
However, integrating forward-consistency optimization with a generative prior such as flow matching poses a significant challenge in maintaining consensus between the two, since the forward-consistency step is driven by the measurements, whereas the flow prior is designed to operate along a structured family of intermediate probability paths.

PnP-CF addresses this challenge through two complementary mechanisms: (i) a decoupled ADMM-like (Alternating Direction Method of Multipliers \citep{boyd_distributed_2010}) variable-splitting structure with dual-variable updates and (ii) a time-adaptive pullback toward the intermediate flow probability path using complementary stochastic and deterministic components (section \ref{methods}).
We argue that these mechanisms stabilize the interaction between the forward-consistency solver and the flow-matching denoiser, thereby promoting consensus between the measurement-driven iterate and the learned flow trajectory (section \ref{pnpcf_analysis}).
While methods such as PnP-Flow \citep{martin2025pnpflow} and FLOWER \citep{pourya2026flower} use flow-based denoisers as priors in similar PnP frameworks, we demonstrate through a series of experiments that PnP-CF not only achieves strong performance on standard linear inverse problems but also extends naturally to nonlinear forward operators, broadening the applicability of flow-based inverse solvers beyond the predominantly linear settings considered in existing work (section \ref{baseline_comparison} and table \ref{tab:results}).
We further evaluate PnP-CF on segmentation-guided reconstruction, where the forward operator is a complex neural network that discards substantial image information, demonstrating that PnP-CF extends beyond analytical measurement models to highly non-injective learned operators (section \ref{segmentation} and table \ref{tab:segmentation}).
Relative to \citet{kumarpnpadmm}, we thus contribute a forward-consistency solver for nonlinear and learned operators, the first nonlinear evaluation of several flow-based baselines, and a segmentation-guided formulation with a highly non-injective neural forward operator.
Our code is available at \url{https://github.com/dibyanshu100/PnP_ConsensusFlow.git}.

%% file: background.tex
\section{Background}
\label{sec:bkg}

\subsection{Flow Matching and Straight Trajectories}
\label{fm_denoiser}

Flow matching \citep{lipman2023flowmatching} is a generative framework for learning a deterministic transformation that transports samples from a simple base distribution $p_0$, typically a standard Gaussian, to a target data distribution $p_1$. 
The central idea is to specify a continuous path of intermediate distributions $(p_t)_{t \in [0,1]}$ that interpolates between $p_0$ and $p_1$, and then to learn a time-dependent velocity field $\rvv_\vtheta$ whose flow $\psi$ follows this path.

A typical choice is to construct $p_t$ from straight-line path \cite{liu2022flowstraight} between samples $(\rvx_0, \rvx_1) \sim \pi$ drawn from a coupling $\pi \in \Gamma(p_0, p_1)$, i.e., a joint distribution with marginals $p_0$ and $p_1$ such as the independent coupling $\pi = p_0 \otimes p_1$, defined as
\begin{equation}
\rvx_t = (1 - t)\,\rvx_0 + t\,\rvx_1, \quad t \in [0,1],
\end{equation}
and $p_t$ is the implicit distribution of $\rvx_t$. 
This path moves from a noise sample at $t=0$ to a data sample at $t=1$ with constant conditional velocity $\rvx_1 - \rvx_0$. 

The flow associated with this probability path can be characterized by a flowmap $\psi_t$ generated by the marginal velocity field $\rvv_t(\rvx) := \mathbb{E}_{\pi}[\rvx_1 - \rvx_0 \mid \rvx_t = \rvx]$, solving
\begin{equation}
\frac{d}{dt} \psi_t(\rvx_0) = \rvv_t(\psi_t(\rvx_0)), \quad \psi_0(\rvx_0) = \rvx_0,
\end{equation}
such that $p_t = (\psi_t)_\# p_0$, and in particular $\psi_1$ maps $p_0$ to $p_1$.
In practice, $\rvv_t$ is unknown and we approximate it using a neural network $\rvv_\vtheta$. 
For straight-line trajectories, the per-pair target velocity is $\rvv_t^*(\rvx_t \mid \rvx_0, \rvx_1) = \rvx_1 - \rvx_0$; regressing onto this target recovers the marginal field $\rvv_t$ in expectation, leading to the conditional flow matching objective
\begin{equation}
\mathcal{L}_{\text{CFM}}(\vtheta) = \mathbb{E}_{(\rvx_0, \rvx_1) \sim \pi, t\sim \mathcal{U}(0,1)}
\left[\left\| \rvv_\vtheta\big((1 - t)\rvx_0 + t \rvx_1, t\big) - (\rvx_1 - \rvx_0) \right\|^2\right].
\end{equation}
For inference, the flow from time $t$ to $1$ can be written as $\psi_1(\rvx_t) = \rvx_t + \int_t^1 \rvv_\vtheta(\rvx_q, q)\, dq$.
This integral can be approximated using an explicit Euler step, yielding the one-step estimate
\begin{equation}
\psi_1(\rvx_t) \approx \tilde{\psi}_1(\rvx_t) = \rvx_t + (1 - t)\, \rvv_\vtheta(\rvx_t, t).
\label{one_step_denoiser}
\end{equation}
The resulting flow operator $\tilde{\psi}_1$ can be interpreted as a one-step, time-dependent denoiser, mapping $\rvx_t$ toward the data distribution at $t=1$. 
We adopt an operator-based view of PnP, using $\tilde{\psi}_1$ as the denoising operator, i.e., $\mathcal{D}_t(\rvx_t) := \tilde{\psi}_1(\rvx_t)$, and describe its role within the PnP-CF framework in section \ref{methods}.

\subsection{Plug-and-Play Algorithms}
\label{bkg:fbs}

\citet{venkatakrishnan2013plug} introduced the PnP algorithm, which addresses inverse problems by combining a forward-consistency term with a learned prior implemented as a denoising operator. 
A key feature of PnP is that the prior is trained independently of the inverse problem, typically as a standalone denoiser, and can therefore be reused across different forward operators. 
This decoupling enables the use of powerful learned priors without requiring problem-specific retraining.
Early PnP methods relied on classical or CNN-based denoisers \citep{Zhang_2017,ryu2019plugandplay}, while subsequent methods incorporated expressive generative priors including GANs \citep{bora2017compressed} and diffusion models \citep{zhu2023denoising,mardani2023variational,li2025decoupled,shrestha2026}.

Motivated by the strong empirical performance of flow matching, several recent works have adapted the PnP framework to use flow-matching models as learned priors.
\citet{martin2025pnpflow} propose PnP-Flow, which integrates a pretrained flow-matching denoiser into a Forward-Backward Splitting (FBS) scheme.
A related approach, FLOWER, proposed by \citet{pourya2026flower}, combines a flow-based denoising step with a proximal forward-consistency update and provides a Bayesian interpretation of the resulting procedure.
We collectively refer to these approaches as \emph{FBS-based} methods.
While these methods demonstrate the effectiveness of flow-matching priors within PnP formulations for relatively simple linear forward operators, the lack of consensus between measurement-driven updates and the evolving flow trajectory limits their effectiveness on more challenging problems (see section \ref{baseline_comparison}).
This motivates PnP-CF, which is specifically designed to maintain consensus between these components, enabling flow-based PnP algorithms to handle more complex forward models, including nonlinear and learned neural-network forward operators.
In the next section, we describe the methodological formulation of PnP-CF.



%% file: methods.tex
\section{Method}
\label{methods}

Akin to ADMM and similar in spirit to the original PnP framework of \citet{venkatakrishnan2013plug}, we decouple the forward-consistency $l(\rvy;\rvx)$ and prior-consistency $s(\rvx)$ terms in \eqref{mapequation} by introducing an auxiliary variable $\rvz$ and enforcing $\rvx=\rvz$.

The corresponding augmented Lagrangian is
\begin{equation}
\mathcal{L}_{\rho}(\rvx,\rvz,\rvu)
= l(\rvy;\rvx) + s(\rvz) + \frac{\rho}{2}\|\rvx-\rvz+\rvu\|_2^2,
\end{equation}
where $\rvu$ is the scaled dual variable and $\rho>0$ is a penalty parameter that controls the strength of the consensus constraint.
The resulting constrained problem is solved iteratively by alternating updates of the primal variables $\rvx,\rvz$ and the dual variable $\rvu$.
At each iteration, the primal variables are updated through two separate subproblems:
\begin{equation}
\label{eq:admm_updates}
\hat{\rvx} \leftarrow \underbrace{\arg\min_{\rvx} l(\rvy;\rvx) + \frac{\rho}{2}\|\rvx-\tilde{\rvx}\|_2^2}_{\mathcal{F}(\rvy,\tilde{\rvx}) \text{ forward-consistency operator}},
\qquad
\hat{\rvz} \leftarrow \underbrace{\arg\min_{\rvz} s(\rvz) + \frac{\rho}{2}\|\tilde{\rvz}-\rvz\|_2^2}_{\mathcal{D}(\tilde{\rvz}) \text{ denoising operator}},
\end{equation}
followed by the scaled dual update $\rvu \leftarrow \rvu + (\hat{\rvx}-\hat{\rvz})$.

The two operators in \eqref{eq:admm_updates} can be interpreted as proximal updates \citep{ParikhBoyd2014} centered at the anchor points $\tilde{\rvx}$ and $\tilde{\rvz}$, respectively. 
The forward-consistency operator $\mathcal{F}$ depends only on the forward model through $l(\rvy;\rvx)$, while $\mathcal{D}_t(\rvx_t):= \tilde{\psi}_1(\rvx_t)$ denotes the one-step flow denoiser introduced in section \ref{fm_denoiser}. 
In the following, we omit the time dependence and write $\mathcal{D}$ for notational simplicity.

\begin{figure}[t]
    \centering
    \includegraphics[width=0.75\linewidth]{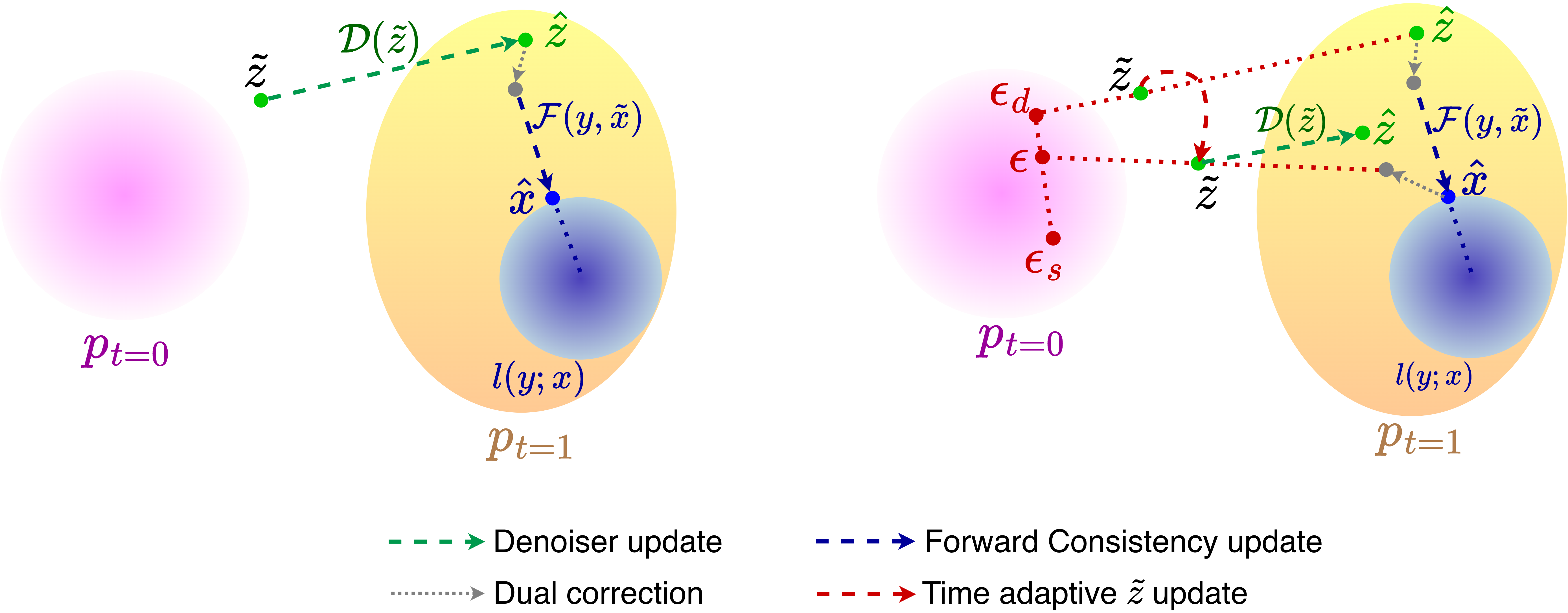}
\caption{Phases of PnP-CF. \emph{Left:} Starting from $\tilde\rvz$, the denoiser $\mathcal{D}(\tilde\rvz)$ produces $\hat\rvz$ (green). The dual correction (gray) shifts $\hat\rvz$ by $\rvu$, and the forward-consistency operator $\mathcal{F}(\rvy,\tilde\rvx)$ maps it to $\hat\rvx$ (blue), pulling it towards the region consistent with $\rvy$ under $l(\rvy;\rvx)$. \emph{Right:} The time-adaptive pullback constructs the next $\tilde\rvz$. The deterministic component $\rvepsilon_d$, computed from the previous $\tilde\rvz$ and $\hat\rvz$, combines with $\rvepsilon_s \sim p_{t=0}$ to form $\rvepsilon$. The dual-corrected point $\hat\rvx + \rvu$ (gray) and the $\rvepsilon$ together produces the new $\tilde\rvz$, which serves as the input to the denoiser at the next iteration.}
    \label{fig:projection}
\end{figure}

A critical design choice of the PnP algorithms is the construction of the anchor points $\tilde{\rvx}$ and $\tilde\rvz$ connecting the alternating updates.
We construct $\tilde{\rvx}$ using $\mathcal{D}$, the one-step flow denoiser, where we, with a slight abuse of notation, align the discretized ODE time-steps $t \in [0,1)$ with the main algorithm iterations $k = 0, \dots, K-1$.
Importantly and in contrast to the FBS-based methods, we further adjust this point through the dual variable $\rvu$ to form the consensus point $\tilde{\rvx} \leftarrow \mathcal{D}(\tilde\rvz) - \rvu$ serving as the input for the next iteration of the forward-consistency update (figure \ref{fig:projection} left).

Conceptually, the inputs $\tilde{\rvx}$ and outputs $\hat\rvx$ of the forward-consistency operator $\mathcal{F}$ live in the endpoint $p_{t=1}$ of the learned probability path $(p_t)_{t\in[0, 1]}$ of the flow-matching model.
However, as the proximal update $\hat\rvx = \mathcal{F}(\rvy, \tilde{\rvx})$ pushes the intermediate sample $\tilde{\rvx}$ towards the region consistent with the forward model, the output $\hat\rvx$ may end up in a low-probability region under the learned flow-matching prior (\emph{off-distribution}) and should be further aligned before applying the denoiser $\mathcal{D}$ in the subsequent iteration.

Since the denoiser $\mathcal{D}$ operates over samples drawn from intermediate distributions $(p_t)_{t < 1}$ we construct $\tilde\rvz$ by a time-adaptive pullback step along the flow-matching straight path, e.g. \citet{liu2022flowstraight}, $\tilde{\rvz} \leftarrow (1-t)\, {\rvepsilon} + t \, (\hat\rvx + {\rvu})$.
While similar pullback has been used in the FBS-based methods, our update differs in two critical aspects.
First, the ADMM-like structure allows to further stabilize the algorithm by enforcing consensus between the primal updates through the dual variable ${\rvu}$.
Second, we construct the latent sample ${\rvepsilon}$ from the flow base distribution $p_{t=0}$ as a time-adaptive combination of a stochastic $\rvepsilon_s$ (used in FBS-based methods), and a deterministic component $\rvepsilon_d$ (not used in FBS-based methods) computed from the pullback point of the previous iteration as shown in the right part of figure \ref{fig:projection}:
\begin{equation}
{\rvepsilon} \leftarrow
\frac{t\,\rvepsilon_d + (1 - t)\,\rvepsilon_s}
{\sqrt{t^2 + (1-t)^2}}, 
\quad 
\rvepsilon_s \sim p_{t=0} = \mathcal{N}(\boldsymbol{0}, \mI),
\quad
\rvepsilon_d \leftarrow \frac{\tilde{\rvz} - t\,\hat\rvz}{1 - t} \enspace ,
\label{eq:adaptive_projection}
\end{equation}
where we again use the simplified notation $t \in [0, 1)$ aligned with the main algorithm iterations $k = 0, \ldots, K-1$. 

The two components in \eqref{eq:adaptive_projection} play complementary roles.
The stochastic component $\rvepsilon_s$, independently sampled from the flow base distribution $p_{t=0}$, can move the iterate across different flow trajectories and therefore promotes exploration.
In contrast, the deterministic component $\rvepsilon_d$ inferred from the previous
pullback point preserves information about the trajectory already followed, keeping subsequent updates closer to the current flow trajectory.
The denominator $\sqrt{t^2 + (1-t)^2}$ in \eqref{eq:adaptive_projection} normalizes the linear combination of the two components to a unit variance and maintains a consistent scale across iterations.
This design follows a simple principle: at early iterations (small $t$), the algorithm emphasizes stochastic exploration via $\rvepsilon_s$, allowing the iterate to traverse different regions of the solution space.
As $t \to 1$, the weight progressively shifts toward $\rvepsilon_d$, encouraging refinement along a consistent flow trajectory aligned with the denoiser.
Overall, this method enables the algorithm to both recover from poor initializations and commit to a stable trajectory as it approaches a solution that satisfies the forward-consistency term.


Finally and complementary to the time-adaptive pullback, the dual update $\rvu \leftarrow \rvu + (\hat{\rvx}-\hat{\rvz})$ provides an explicit mechanism for enforcing consensus between the forward and prior-consistent solutions. 
The residual $\hat{\rvx}-\hat{\rvz}$ measures their disagreement, and its accumulation in $\rvu$ carries this information across iterations, so that inconsistencies are not discarded after a single update.
The dual variable then feeds this accumulated discrepancy back into both sides of the alternating scheme: it shifts the denoised solution through $\tilde{\rvx}=\hat{\rvz}-\rvu$ before the next forward-consistency update and adjusts the pullback endpoint through $\hat{\rvx}+\rvu$ before the subsequent prior-consistency step. 
In this way, $\rvu$ acts as a persistent correction signal that counteracts systematic drift between the two updates and progressively drives them toward a common solution. 
Together with the time-adaptive pullback, this provides PnP-CF with a mechanism for maintaining agreement between forward-consistency and the evolving flow trajectory throughout optimization.

In summary, relative to the FBS-based formulations, PnP-CF introduces two mechanisms for maintaining consistency throughout the iterative reconstruction: (i) the dual variable $\rvu$, which accumulates disagreement between the forward-consistency and denoising updates and feeds it back into subsequent iterations, and (ii) the deterministic component $\rvepsilon_d$ in the time-adaptive pullback, which retains information from the previous pullback and promotes refinement along a consistent flow trajectory.
Together, these mechanisms provide explicit cross-iteration consensus and trajectory continuity, while allowing forward and prior-consistency subproblem to be optimized more flexibly. 
This makes the formulation applicable to more involved forward-consistency problems, including nonlinear forward operators (section \ref{experiments}).
We further analyze these design choices in section \ref{pnpcf_analysis}.
The complete PnP-CF procedure is summarized in algorithm \ref{algo:pnpcf}, with the iteration index $k$ reintroduced to make the update sequence explicit.


\begin{algorithm}[t]
\caption{PnP-CF with time-adaptive pullback (adapted from \cite{kumarpnpadmm})}
\label{algo:pnpcf}
\begin{algorithmic}[1]
\STATE \textbf{Input:} Flow denoiser $\mathcal{D}_t$, observation $\rvy$, forward-consistency $l(\rvy;\rvx)$, penalty parameter $\rho_k$, number of iterations $K \in \mathbb{N}$, time steps $t_k=k/K$
\STATE \textbf{Initialize:} $\tilde{\rvx}^{\,0}\sim\mathcal{N}(\boldsymbol{0},\mI)$, $\rvu^0=\boldsymbol{0}$, $\rvepsilon^0=\tilde{\rvx}^{\,0}$
\FOR{$k=0,\dots,K-1$}
\STATE $\hat{\rvx}^{\,k+1}=\arg\min_{\rvx}\; l(\rvy;\rvx)+\frac{\rho_k}{2}\|\rvx-\tilde{\rvx}^{\,k}\|_2^2$ \hfill $\triangleright$ forward-consistency
\STATE $\tilde{\rvz}^{\,k}=(1-t_k)\rvepsilon^k+t_k(\hat{\rvx}^{\,k+1}+\rvu^k)$ \hfill $\triangleright$ time-adaptive pullback
\STATE $\hat{\rvz}^{\,k+1}=\mathcal{D}_{t_k}(\tilde{\rvz}^{\,k})$ \hfill $\triangleright$ prior-consistency
\STATE $\rvepsilon_d^{\,k}=\frac{\tilde{\rvz}^{\,k}-t_k\hat{\rvz}^{\,k+1}}{1-t_k}, \qquad \rvepsilon_s^{\,k}\sim\mathcal{N}(\boldsymbol{0},\mI)$ \hfill $\triangleright$ deterministic and stochastic components
\STATE $\rvepsilon^{k+1}=\frac{t_k\rvepsilon_d^{\,k}+(1-t_k)\rvepsilon_s^{\,k}}{\sqrt{t_k^2+(1-t_k)^2}}$ \hfill $\triangleright$ adaptive latent update
\STATE $\rvu^{k+1}=\rvu^k+\hat{\rvx}^{\,k+1}-\hat{\rvz}^{\,k+1}$ \hfill $\triangleright$ dual update
\STATE $\tilde{\rvx}^{\,k+1}=\hat{\rvz}^{\,k+1}-\rvu^{k+1}$ \hfill $\triangleright$ consensus anchor
\ENDFOR
\STATE \textbf{return} $\hat{\rvz}^{\,K}$
\end{algorithmic}
\end{algorithm}

%% file: experiments.tex
\section{Experiments and Results}
\label{experiments}

In this section, we evaluate PnP-CF on inverse problems defined by a range of linear and nonlinear forward models, $\rvy = \mathcal{A}(\rvx) + \boldsymbol{\eta}$, where $\boldsymbol{\eta} \sim \mathcal{N}(\boldsymbol{0}, \sigma^2 \mI)$ and $\mathcal{A}$ denotes either a linear or nonlinear operator.
Under this additive Gaussian-noise model, the forward-consistency term in \eqref{mapequation} becomes
$l(\rvy; \rvx) = \frac{1}{2\sigma^2} \|\mathcal{A}(\rvx) - \rvy\|_2^2$.
For linear forward operators, the corresponding forward-consistency proximal $\mathcal{F}(\rvy,\tilde{\rvx})$ reduces to a convex quadratic problem, which we solve efficiently using conjugate gradients \citep{hestenes1952methods}.
For nonlinear forward operators, the resulting forward-consistency subproblem is generally nonquadratic and does not admit a simple solution. 
We therefore solve it approximately using gradient-based optimization through the differentiable forward operator $\mathcal{A}(\rvx)$ (see appendix \ref{appendix:forward_models} for details).

For our experiments, we evaluate on randomly selected test images from CelebA \citep{yang2015facialparts} ($128\times128$) and AFHQ-Cat \citep{choi2020starganv} ($256\times256$), using pretrained flow-matching models from \cite{martin2025pnpflow}, and report PSNR and SSIM for reconstruction fidelity and LPIPS for perceptual quality. 
We consider three linear inverse problems: box inpainting, random inpainting, and super-resolution.
For box inpainting, the forward operator $\mathcal{A}$ masks a centered $60\times60$ region for CelebA and a $120\times120$ region for AFHQ-Cat.
For random inpainting, $\mathcal{A}$ masks $85\%$ of the pixels uniformly at random.
For super-resolution, $\mathcal{A}$ downsamples the image by factors of $\times4$ and $\times8$ for CelebA and AFHQ-Cat, respectively.
We add Gaussian measurement noise with $\sigma=0.05$ for box inpainting and super-resolution, and $\sigma=0.01$ for random inpainting.

We additionally consider three nonlinear inverse problems: nonlinear deblurring, phase retrieval, and HDR reconstruction. 
For nonlinear deblurring, the forward operator is a pretrained KernelWizard network using the GOPRO-wVAE model \citep{tran2021blurencoding}, which maps a sharp image to a blurred one via a nonlinear network forward pass conditioned on a latent blur code.
For phase retrieval, the forward operator is $\mathcal{A}(\rvx)=|\mathcal{T}(\mathcal{P}(\rvx))|$, where $\mathcal{P}$ denotes symmetric zero-padding and $\mathcal{T}$ is the orthonormal two-dimensional Fourier transform; we use an oversampling factor of $2$.
For HDR reconstruction, we use the nonlinear saturation operator $\mathcal{A}(\rvx)=\operatorname{clip}(2\rvx,-1,1)$. Gaussian measurement noise is added with $\sigma=0.05$ for nonlinear deblurring and HDR reconstruction and $\sigma=0.01$ for phase retrieval.

The main hyperparameter of PnP-CF is the penalty $\rho$, which controls the consensus strength between the forward-consistency and prior updates.
In contrast to \citet{kumarpnpadmm}, who use a constant penalty and a linear time grid, we introduce the schedules $\rho_k=\rho/(1-t_k)^\beta$ and $t_k=(k/K)^\gamma$, where $\beta$ controls the growth of the penalty and $\gamma$ controls the progression along the flow trajectory.
Since the pullback step weights $\epsilon_d$ by $t_k$, choosing $\gamma<1$ shifts the emphasis toward the deterministic component $\epsilon_d$ earlier, whereas $\gamma>1$ retains the stochastic component $\epsilon_s$ for longer (see appendix \ref{app:role_hyperparams}).
Following \citet{martin2025pnpflow}, we average the outputs of several denoiser calls per iteration.
The hyperparameter values used in our experiments are reported in appendix \ref{app:hyperparams_intro}.

\subsection{Analysis of PnP-CF}
\label{pnpcf_analysis}

\begin{figure}[ht]
    \centering

    \begin{minipage}[t]{0.58\linewidth}
        \centering
        \includegraphics[
            width=\linewidth,
            height=0.24\textheight,
            keepaspectratio
        ]{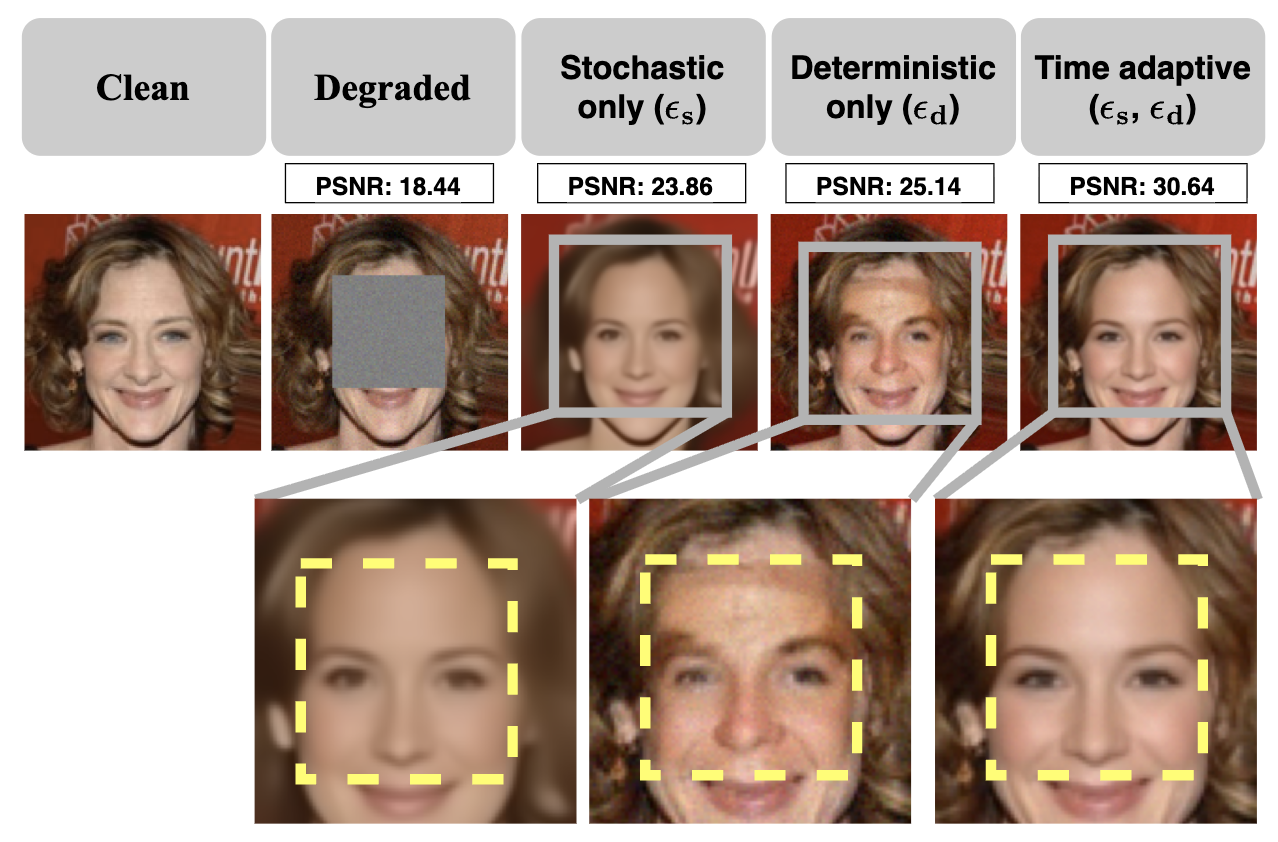}
    \end{minipage}
    \hfill
    \begin{minipage}[t]{0.41\linewidth}
        \centering
        \includegraphics[
            width=\linewidth,
            height=0.24\textheight,
            keepaspectratio
        ]{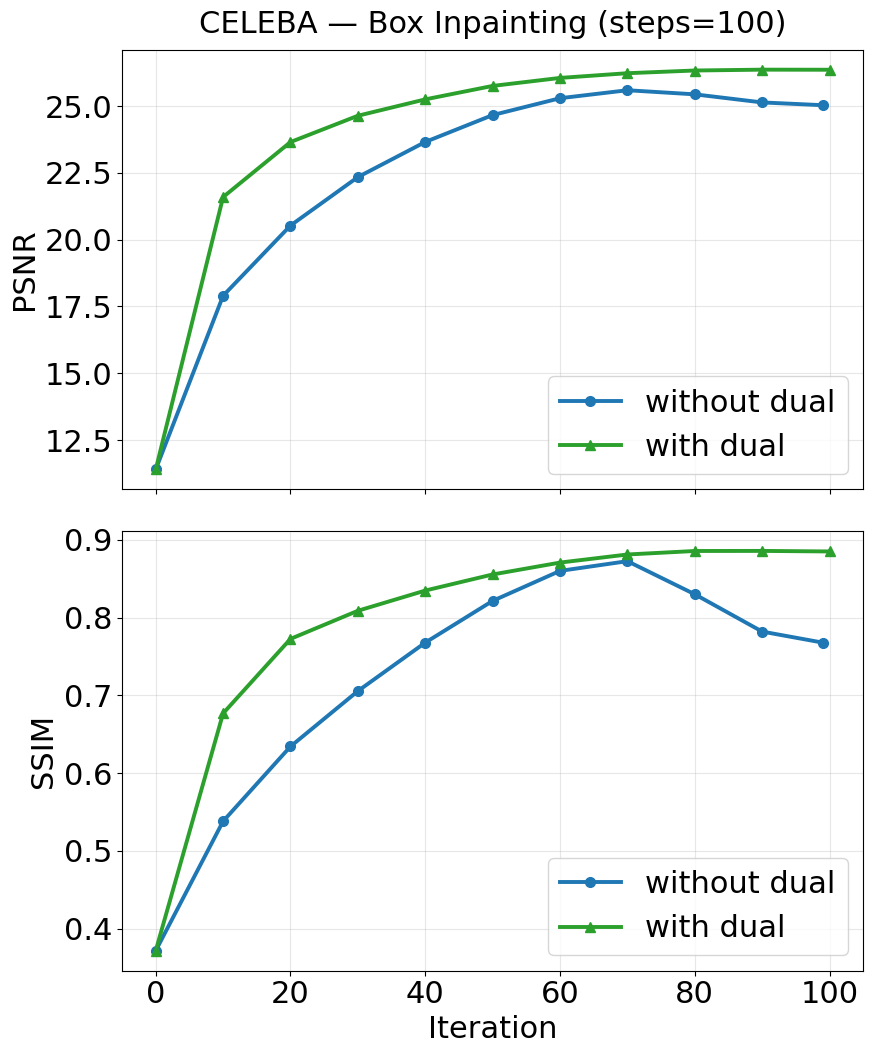}
    \end{minipage}
    \caption{Ablation of PnP-CF update strategies on the box-inpainting inverse problem.
Left: qualitative comparison of stochastic-only, deterministic-only, and time-adaptive pullback strategies.
Right: effect of the dual variable $\rvu$ on the forward-consistency and denoising updates, shown through PSNR and SSIM trajectories over iterations, averaged over 100 images.}
    \label{fig:ablation_celeba}
\end{figure}

We first build intuition for the components of PnP-CF algorithm through a qualitative comparison on a linear box-inpainting problem.
In the left side of figure \ref{fig:ablation_celeba} we show the effects of the stochastic and deterministic components of the pullback step in \eqref{eq:adaptive_projection}.
Using only the stochastic component $\rvepsilon_s$ (``stochastic-only'') causes the iterates to oscillate across different randomly sampled flow trajectories. Averaging these inconsistent samples yields blurry reconstructions.
Using only the deterministic component $\rvepsilon_d$ (``deterministic-only''), the algorithm becomes highly dependent on the random initial flow state, restricting exploration to a single solution path that may be difficult to align with the forward-model constraints.
Finally, the proposed time-adaptive combination promotes exploration in early iterations and gradually transitions to trajectory-consistent updates, yielding reconstructions that are both sharp and semantically consistent.

In the right side of figure \ref{fig:ablation_celeba}, we show the effects of the ADMM-like structure of PnP-CF method on the same box-inpainting task, comparing \emph{with} and \emph{without} the dual variable $\rvu$ updates. 
Omitting $\rvu$ reduces the algorithm to FBS-style approach, with the time-adaptive pullback retained to isolate the effect of the dual update.
The graphs report the PSNR and SSIM at each iteration, averaged over 100 samples.
Without $\rvu$, the forward-consistency and denoising updates are solved independently at each iteration, resulting in slower convergence and lower reconstruction quality in terms of both PSNR and SSIM.
With $\rvu$, the accumulated discrepancy progressively aligns the two subproblems, producing a faster and more stable improvement in reconstruction quality.
We show a more detailed ablation in appendix \ref{ablation}.

\begin{figure}[h]
    \centering
    \includegraphics[width=\linewidth]{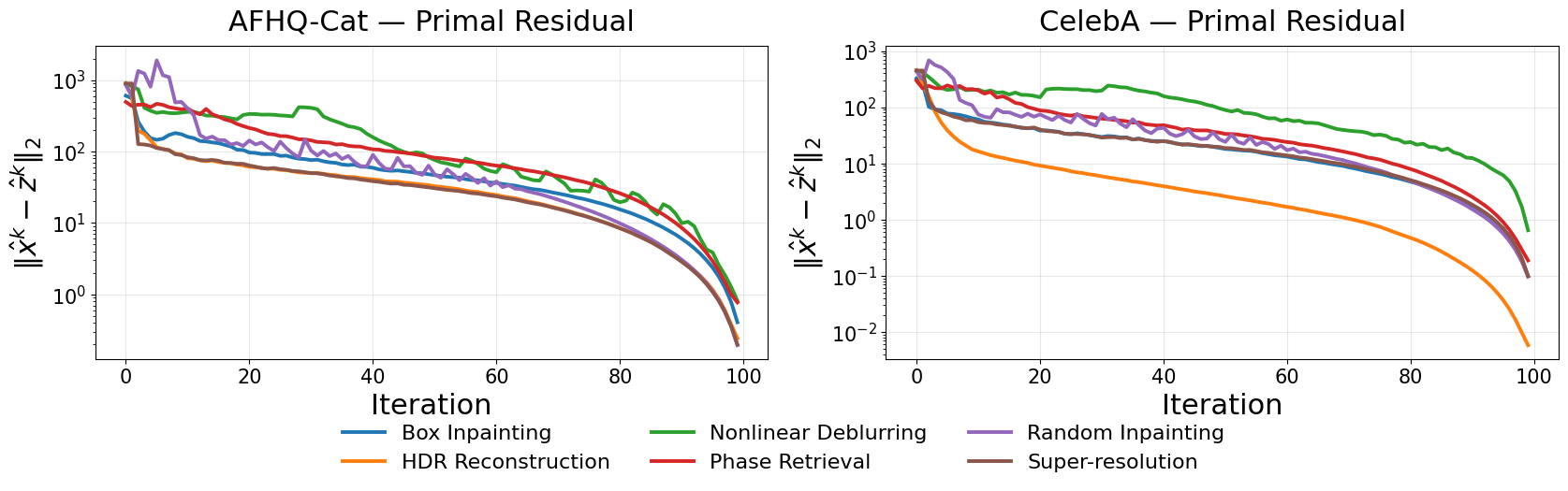}
    \caption{Evolution of the primal residual $\|\hat\rvx^k-\hat\rvz^k\|_2$ across linear and nonlinear inverse problems on AFHQ-Cat (left) and CelebA (right).}
    \label{fig:primal_res}
\end{figure}

We next examine the convergence behavior of PnP-CF through the primal residual
$r_{\mathrm{pri}}^k := \|\hat\rvx^k-\hat\rvz^k\|_2$.
As shown in figure \ref{fig:primal_res}, the residual decreases substantially across all linear and nonlinear tasks on both datasets, indicating progressive consensus between the forward and prior-consistency variables.
Since the prior update is implemented by a learned denoiser rather than an exact proximal operator, classical ADMM convergence guarantees \citep{boyd_distributed_2010} do not directly apply; we therefore interpret these results as empirical evidence of convergence.

\input{results_table.tex}

\begin{figure}[t]
    \centering
    \includegraphics[width=\linewidth]{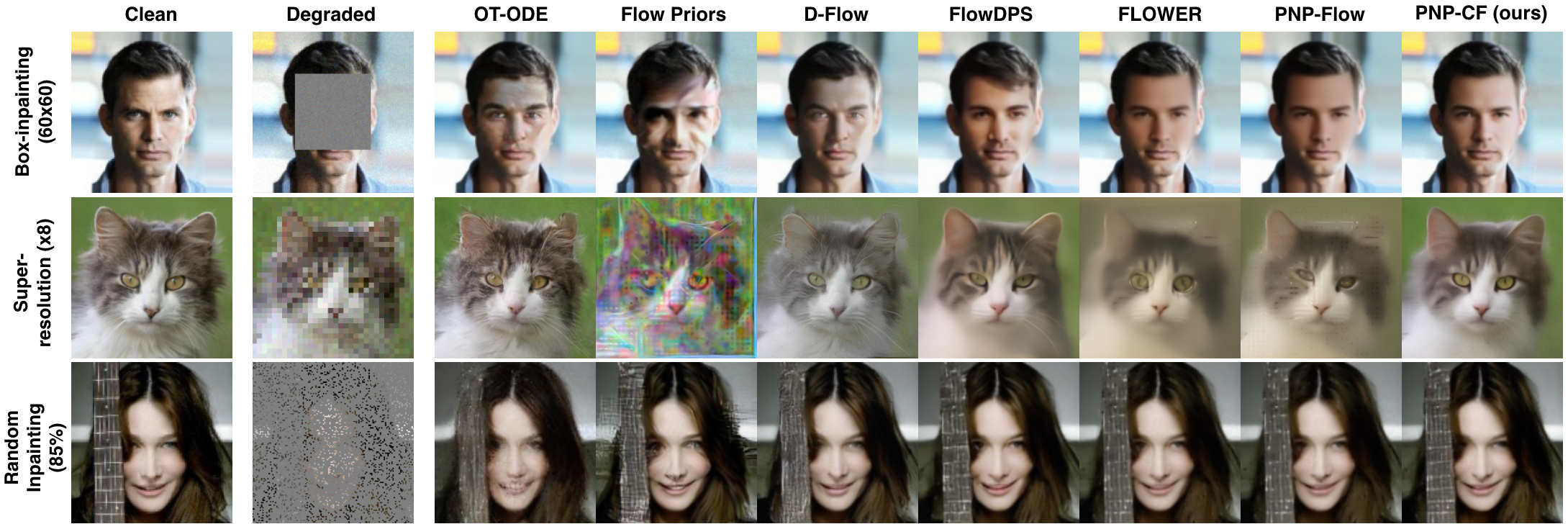}
    \caption{Comparison of image restoration methods on box inpainting, super-resolution, and random inpainting, with additional qualitative results provided in appendix \ref{appendix:linear}. Our method yields visually sharp reconstructions while preserving structural consistency across tasks.}
    \label{fig:results_linear}
\end{figure}

\subsection{Comparison with Baseline Methods}
\label{baseline_comparison}


We compare against six flow-based inverse solvers: OT-ODE \citep{pokle2024training}, D-Flow \citep{benhamu2024dflow}, Flow Priors \citep{zhang2025flowpriors}, PnP-Flow \citep{martin2025pnpflow}, FLOWER \citep{pourya2026flower}, and FlowDPS \citep{kim2025flowdps}.
Among these, only D-Flow reports results on nonlinear experiments.
We extend PnP-Flow and FlowDPS to nonlinear forward operators ourselves (appendix \ref{app:baseline_adaptation}), providing, to our knowledge, their first nonlinear evaluation.
OT-ODE and Flow Priors are derived specifically for linear operators and provide no mechanism for nonlinear measurements, while FLOWER only sketches a theoretical nonlinear extension via Langevin sampling without implementation or evaluation; we therefore exclude these three methods from the nonlinear comparison.
For linear inverse problems, we use stronger degradations than prior baseline evaluations to assess performance under more challenging measurement conditions.
Since the reported results are not directly comparable, we rerun all baselines on our test instances using their released implementations and recommended hyperparameter settings.
For completeness, we also report results under the milder settings of \citet{martin2025pnpflow,pourya2026flower} in appendix \ref{appendix:baselines}.

Table~\ref{tab:results} reports quantitative results averaged over 100 test images.
PnP-CF achieves the best performance on most tasks and metrics, with the largest margins on nonlinear problems.
FBS-based PnP methods and FlowDPS remain competitive on inpainting but degrade substantially on super-resolution and nonlinear tasks.
D-Flow achieves comparable performance on nonlinear deblurring, but is the least computationally efficient method overall.

Figure \ref{fig:results_linear} and right side of figure \ref{fig:results_nonlinear} show qualitative comparisons for the linear and nonlinear inverse problems, respectively.
For the linear tasks, several baselines recover the overall image structure but tend to lose fine details or introduce visible artifacts, particularly in box inpainting and super-resolution, whereas PnP-CF produces sharper reconstructions while better preserving structural consistency. 
This advantage extends to the nonlinear problems, where PnP-CF maintains consistently high reconstruction quality, while FBS-based methods show more pronounced degradation.

\begin{wraptable}{r}{0.35\linewidth}
\centering
\caption{Per batch runtime and peak GPU memory (HDR recon. - CelebA)}
\label{tab:efficiency-main}
\small
\setlength{\tabcolsep}{2pt}
\begin{tabular}{lcc}
\toprule
Method & Time (s) & Mem (GB) \\
\midrule
PnP-CF (ours) & 17.62 & \textbf{0.30} \\
PnP-Flow      & 16.40 & \textbf{0.30} \\
D-Flow        & 71.42 & 13.34 \\
FlowDPS       & \textbf{11.14} & 0.32 \\
\bottomrule
\end{tabular}
\end{wraptable}

Table~\ref{tab:efficiency-main} reports per-batch runtime and peak GPU memory on HDR reconstruction for CelebA, using a batch size of 4 and averaging over 25 batches.
PnP-CF remains computationally efficient, incurring only a small runtime overhead over PnP-Flow while maintaining a comparable peak memory footprint.
This efficiency comes directly from the design of PnP-CF: the flow matching denoiser is used only for the forward prior update and is never differentiated through.
Consequently, the additional nonlinear forward-consistency steps require gradients only through the forward operator, without storing or backpropagating through the flow trajectory.
This leads to a substantial computational advantage over trajectory-optimization methods such as D-Flow, which repeatedly integrates the ODE and backpropagates through the resulting trajectory while optimizing the initial noise.
Thus, PnP-CF extends flow-based reconstruction to complex nonlinear forward models while retaining the lightweight computational profile of PnP-style methods.
A more detailed analysis of computational complexity, together with runtime and memory results across inverse problems, is provided in appendix \ref{appendix:computational}.

\subsection{Segmentation-guided reconstruction}
\label{segmentation}

\begin{figure*}[t]
    \centering
    \begin{minipage}[c]{0.35\textwidth}
        \centering
        \includegraphics[width=\linewidth]{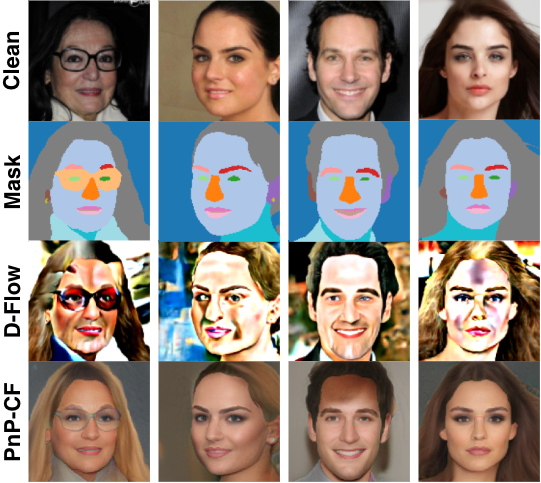}
    \end{minipage}
    \hfill
    \begin{minipage}[c]{0.62\textwidth}
        \centering
        \includegraphics[width=\linewidth]{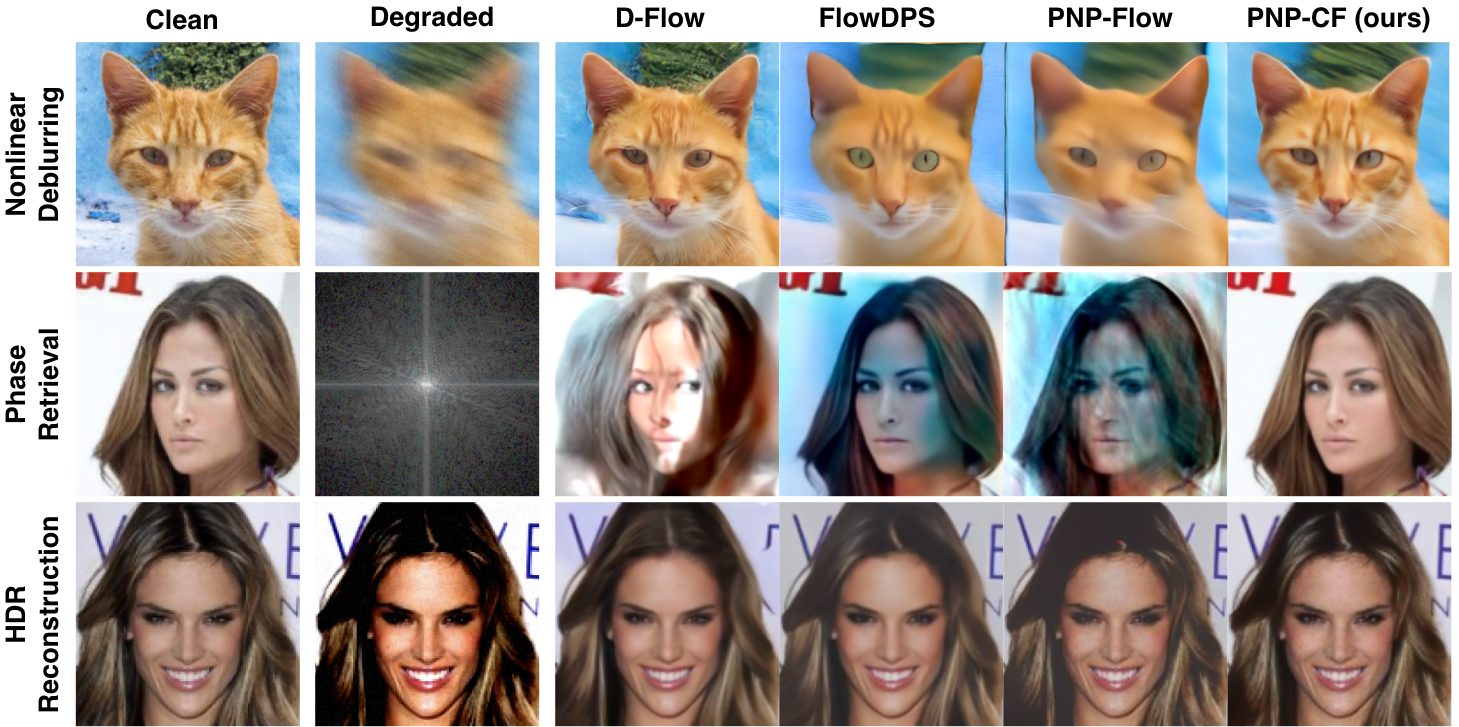}
    \end{minipage}

    \caption{
        Qualitative results. \textbf{Left:} Segmentation-guided reconstruction results for D-Flow and PnP-CF.
        \textbf{Right:} Qualitative reconstruction on nonlinear inverse problems, including HDR reconstruction,
        phase retrieval, and nonlinear deblurring with additional results in appendix \ref{appendix:nonlinear}.
    }
    \label{fig:results_nonlinear}
\end{figure*}

To further test the flexibility of PnP-CF beyond inverse problems with an additive Gaussian measurement model, we consider segmentation-guided reconstruction on CelebA-HQ \citep{karras2018progressive}.
This setting is substantially more challenging: the forward operator is a pretrained segmentation network, and is highly non-injective, since many visually distinct images can produce the same semantic segmentation mask.

Specifically, $\mathcal{A}(\rvx)$ is a pretrained segmentation network that outputs, for each pixel, a distribution over $C=19$ semantic classes.
Since these observations are categorical rather than corrupted by additive Gaussian noise, we replace the squared-error term with the corresponding negative log-likelihood of a categorical model, i.e., the cross-entropy loss
$l(\rvy;\rvx) = -\sum_i \sum_{c=1}^{C} \rvy_{i,c}\log \mathcal{A}(\rvx)_{i,c}$,
where $i$ indexes pixels and $\rvy$ denotes the target segmentation mask.
Similar to the other nonlinear forward operators, we approximate the corresponding proximal update using gradient-based optimization of the resulting forward-consistency objective.

\begin{wraptable}{r}{0.42\linewidth}
\centering
\caption{Segmentation-guided reconstruction results.}
\label{tab:segmentation}
\small
\begin{tabular}{lccc}
\toprule
\textbf{Method} & \textbf{LPIPS}$\downarrow$ & \textbf{SSIM}$\uparrow$ & \textbf{mIoU}$\uparrow$ \\
\midrule
PnP-CF  & \best{0.27} & \best{0.41} & \best{0.99} \\
PnP-Flow & 0.35 & 0.28 & 0.37 \\
D-Flow  & 0.38 & 0.11 & 0.98 \\
FlowDPS & 0.36 & 0.28 & 0.33 \\
\bottomrule
\end{tabular}
\end{wraptable}

The ambiguity of the segmentation mask makes pixel-wise reconstruction metrics difficult to interpret.
We therefore omit PSNR and instead report LPIPS and SSIM as perceptual and structural measures relative to the reference image, and additionally report mIoU, which measures consistency with the observed segmentation by comparing the segmentation mask predicted from the reconstructed image with the target segmentation mask.
Table \ref{tab:segmentation} shows that PnP-CF stands out by achieving both high mIoU and reconstruction quality.
Among the baselines, D-Flow is the only method to attain a similarly high mIoU, indicating strong adherence to the segmentation constraint, but it performs poorly in reconstruction quality as shown in figure \ref{fig:results_nonlinear} (left), where we compare PnP-CF with D-Flow. 
PnP-Flow and FlowDPS achieve substantially lower mIoU and fail to reliably satisfy the target segmentation constraint. 
We therefore omit them from the main qualitative figure and provide their reconstructions in appendix \ref{appendix:segmentation_additional}.
Qualitatively, PnP-Flow exhibits severe mode collapse, whereas FlowDPS produces plausible but weakly conditioned samples with limited correspondence to the target segmentation.

Overall, segmentation-guided reconstruction demonstrates a key strength of PnP-CF: its ability to handle complex learned forward operators while preserving both measurement consistency and reconstruction quality. 
Among the evaluated methods, PnP-CF is the only one to achieve this balance for the segmentation operator, showing that the framework extends naturally beyond standard linear and nonlinear inverse problems to substantially more complex neural forward models.

%% file: results_table.tex
\begin{table*}[t]
\centering

\caption{Reconstruction quality metrics for \textcolor{blue}{linear} (left panel) and \textcolor{darkgreen}{nonlinear} (right panel)
inverse problems on CelebA and AFHQ-Cat. Best results are shown in \textbf{bold}.}

\label{tab:results}

\begin{minipage}[t]{0.49\linewidth}
\centering
\setlength{\tabcolsep}{3pt}
\renewcommand{\arraystretch}{1.15}
\begin{adjustbox}{width=\linewidth}
\begin{tabular}{c l ccc ccc}
\toprule
& & \multicolumn{3}{c}{CelebA} & \multicolumn{3}{c}{AFHQ-Cat} \\
\cmidrule(lr){3-5} \cmidrule(lr){6-8}
Task & Method & PSNR$\uparrow$ & LPIPS$\downarrow$ & SSIM$\uparrow$
              & PSNR$\uparrow$ & LPIPS$\downarrow$ & SSIM$\uparrow$ \\
\midrule

\multirow{7}{*}{\rotatebox[origin=c]{90}{\textcolor{blue}{\makecell{Box\\Inpainting}}}}
& PnP-CF (ours)         & \best{26.36} & \best{0.05} & \best{0.89} & \best{22.65} & \best{0.13} & 0.82 \\
& PnP-Flow               & 24.75 & 0.08 & 0.88 & 21.41 & 0.21 & 0.82 \\
& FLOWER                 & 25.12 & 0.06 & 0.88 & 21.43 & 0.16 & \best{0.83} \\
& FlowDPS                & 24.22 & 0.08 & 0.86 & 19.41 & 0.21 & 0.79 \\
& D-Flow                & 24.60 & 0.08 & 0.80 & 21.68 & 0.19 & 0.71 \\
& Flow Priors            & 21.98 & 0.11 & 0.81 & 20.24 & 0.20 & 0.76 \\
& OT-ODE                & 24.07 & 0.07 & 0.85 & 19.45 & 0.16 & 0.78 \\

\midrule

\multirow{7}{*}{\rotatebox[origin=c]{90}{\textcolor{blue}{\makecell{Super\\Resolution}}}}
& PnP-CF (ours)         & \best{26.45} & 0.08 & \best{0.79} & \best{23.72} & 0.27 & \best{0.66} \\
& PnP-Flow               & 19.97 & 0.22 & 0.61 & 19.74 & 0.38 & 0.55 \\
& FLOWER                 & 22.63 & 0.15 & 0.70 & 19.30 & 0.45 & 0.55 \\
& FlowDPS                & 24.44 & 0.10 & 0.76 & 20.59 & 0.41 & 0.60 \\
& D-Flow                & 24.13 & 0.09 & 0.70 & 20.84 & 0.29 & 0.48 \\
& Flow Priors            & 23.54 & 0.19 & 0.56 & 17.77 & 0.55 & 0.35 \\
& OT-ODE                & 25.33 & \best{0.06} & 0.78 & 22.47 & \best{0.20} & 0.58 \\

\midrule

\multirow{7}{*}{\rotatebox[origin=c]{90}{\textcolor{blue}{\makecell{Random\\Inpainting}}}}
& PnP-CF (ours)         & \best{29.93} & \best{0.04} & \best{0.89} & \best{30.72} & \best{0.07} & \best{0.85} \\
& PnP-Flow              & 29.04 & 0.05 & 0.87 & 30.33 & 0.10 & \best{0.85} \\
& FLOWER                 & 29.04 & 0.05 & \best{0.89} & 30.39 & 0.10 & \best{0.85} \\
& FlowDPS                & 28.51 & 0.05 & 0.87 & 29.72 & 0.12 & 0.83 \\
& D-Flow                & 28.84 & \best{0.04} & 0.87 & 29.33 & 0.09 & 0.81 \\
& Flow Priors            & 25.65 & 0.08 & 0.82 & 28.37 & 0.11 & 0.82 \\
& OT-ODE                & 24.30 & 0.12 & 0.75 & 26.26 & 0.16 & 0.74 \\

\bottomrule
\end{tabular}
\end{adjustbox}
\end{minipage}%
\hfill
\begin{minipage}[t]{0.49\linewidth}
\centering
\setlength{\tabcolsep}{3pt}
\renewcommand{\arraystretch}{1.15}
\begin{adjustbox}{width=\linewidth}
\begin{tabular}{c l ccc ccc}
\toprule
& & \multicolumn{3}{c}{CelebA} & \multicolumn{3}{c}{AFHQ-Cat} \\
\cmidrule(lr){3-5} \cmidrule(lr){6-8}
Task & Method & PSNR$\uparrow$ & LPIPS$\downarrow$ & SSIM$\uparrow$
              & PSNR$\uparrow$ & LPIPS$\downarrow$ & SSIM$\uparrow$ \\
\midrule

\multirow{6}{*}{\rotatebox[origin=c]{90}{\textcolor{darkgreen}{\makecell{HDR\\Reconstruction}}}}
& PnP-CF (ours)         & \best{27.59} & \best{0.04} & \best{0.85} & \best{26.29} & \best{0.08} & \best{0.86} \\
& PnP-Flow               & 22.24 & 0.07 & 0.81 & 24.77 & 0.12 & 0.84 \\
& FLOWER                & -- & -- & -- & -- & -- & -- \\
& FlowDPS                & 22.40 & 0.10 & 0.80 & 25.25 & 0.18 & 0.81 \\
& D-Flow                & 24.16 & 0.09 & 0.77 & 25.59 & 0.16 & 0.75 \\
& Flow Priors            & -- & -- & -- & -- & -- & -- \\
& OT-ODE                & -- & -- & -- & -- & -- & -- \\

\midrule

\multirow{6}{*}{\rotatebox[origin=c]{90}{\textcolor{darkgreen}{\makecell{Phase\\Retrieval}}}}
& PnP-CF (ours)         & \best{30.33} & \best{0.11} & \best{0.80} & \best{25.27} & \best{0.22} & \best{0.70} \\
& PnP-Flow               & 22.57 & 0.22 & 0.62 & 16.19 & 0.45 & 0.41 \\
& FLOWER                & -- & -- & -- & -- & -- & -- \\
& FlowDPS                & 16.29 & 0.37 & 0.47 & 12.14 & 0.63 & 0.36 \\
& D-Flow                & 18.03 & 0.28 & 0.49 & 11.41 & 0.59 & 0.28 \\
& Flow Priors            & -- & -- & -- & -- & -- & -- \\
& OT-ODE                & -- & -- & -- & -- & -- & -- \\

\midrule

\multirow{6}{*}{\rotatebox[origin=c]{90}{\textcolor{darkgreen}{\makecell{Nonlinear\\Deblurring}}}}
& PnP-CF (ours)         & \best{27.09} & \best{0.09} & \best{0.79} & 22.28 & 0.29 & \best{0.58} \\
& PnP-Flow              & 25.43 & 0.21 & 0.72 & 20.60 & 0.48 & 0.54 \\
& FLOWER                & -- & -- & -- & -- & -- & -- \\
& FlowDPS                & 25.55 & 0.14 & 0.74 & 20.41 & 0.42 & 0.54 \\
& D-Flow                & 27.02 & \best{0.09} & 0.75 & \best{23.45} & \best{0.20} & 0.56 \\
& Flow Priors            & -- & -- & -- & -- & -- & -- \\
& OT-ODE                & -- & -- & -- & -- & -- & -- \\
\bottomrule
\end{tabular}
\end{adjustbox}
\end{minipage}

\end{table*}

%% file: appendix_main.tex
\input{appendix_conjugate_gradient.tex}
\input{appendix_experiments.tex}
\input{ablation.tex}

\clearpage

\section{Additional results on linear tasks with Gaussian noise}
\label{appendix:linear}

\begin{figure}[h]
    \centering
    \includegraphics[width=\textwidth,height=\textheight,keepaspectratio]{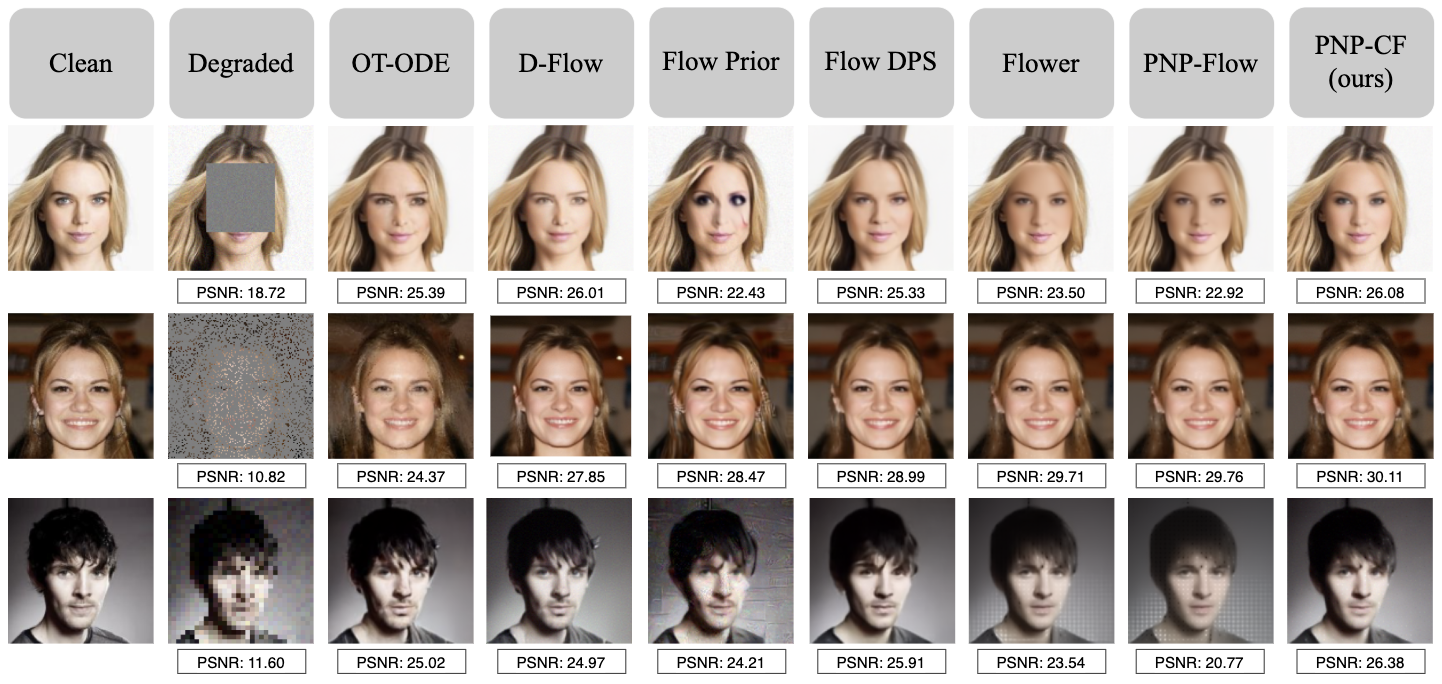}
    \caption{Additional qualitative comparisons on the CelebA dataset across linear inverse tasks, including 
    box inpainting (1st row), random inpainting (2nd row) and super-resolution (3rd row).
    }
\end{figure}

\begin{figure}[h]
    \centering
    \includegraphics[width=\textwidth,height=\textheight,keepaspectratio]{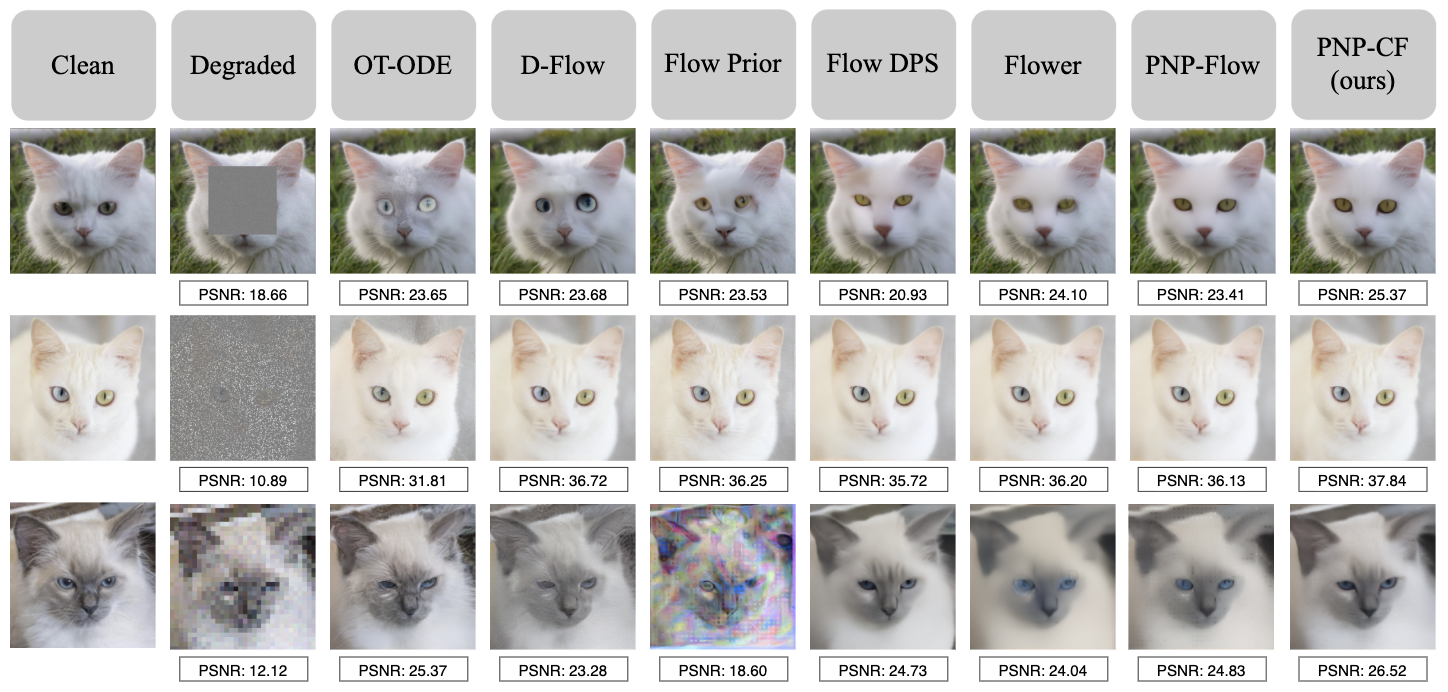}
    \caption{Additional qualitative comparisons on the AFHQ-Cat dataset across linear inverse tasks, including 
    box inpainting (1st row), random inpainting (2nd row) and super-resolution (3rd row).
    }
\end{figure}

\clearpage
\section{Additional results on nonlinear tasks with Gaussian noise}
\label{appendix:nonlinear}

\begin{figure}[H]
    \centering
    \includegraphics[width=0.9\textwidth,height=\textheight,keepaspectratio]{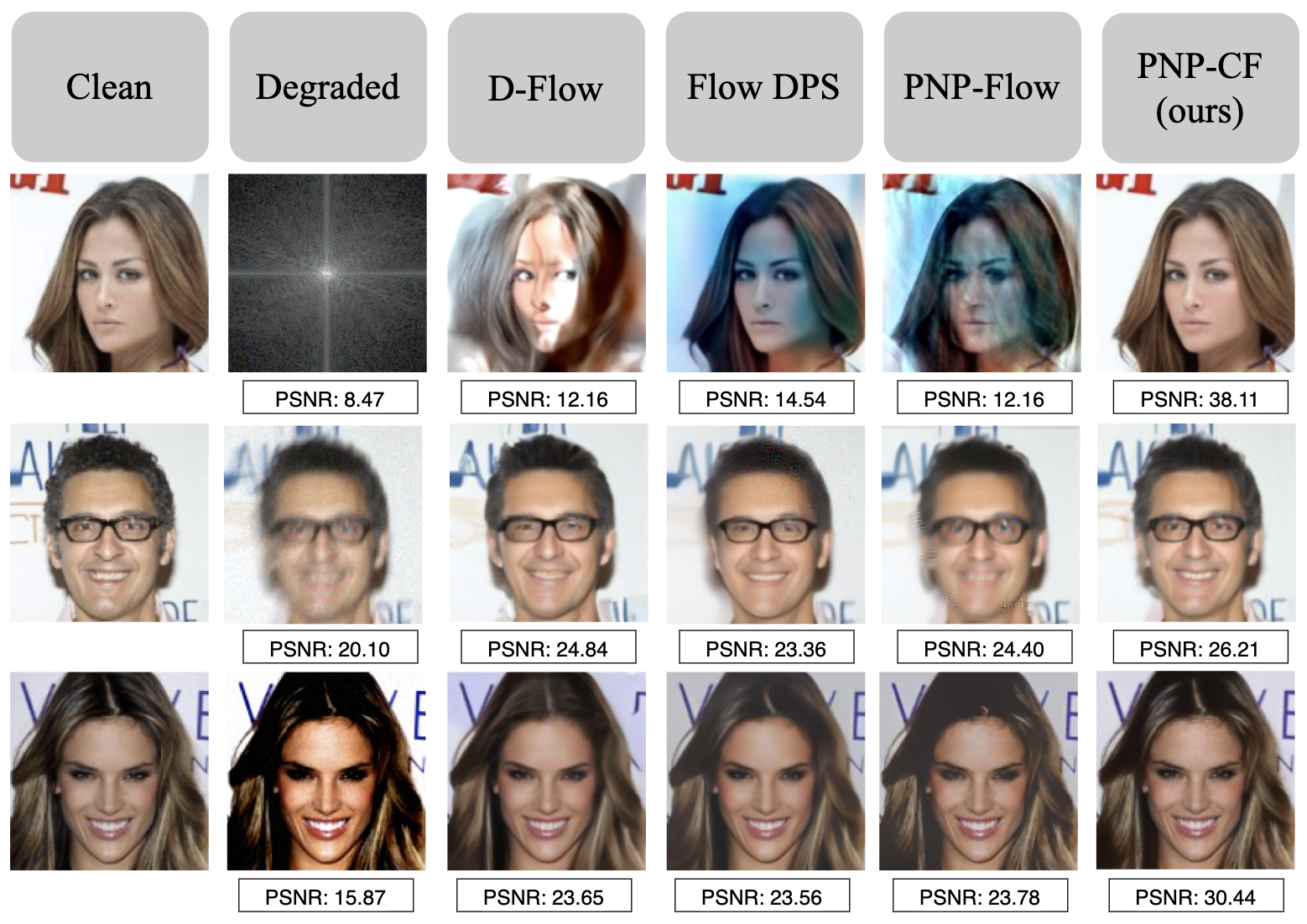}
    \caption{Additional qualitative comparisons on the CelebA dataset across nonlinear inverse tasks, including 
    phase retrieval (1st row), nonlinear deblurring (2nd row) and HDR reconstruction (3rd row).
    }
\end{figure}

\begin{figure}[H]
    \centering
    \includegraphics[width=0.9\textwidth,height=\textheight,keepaspectratio]{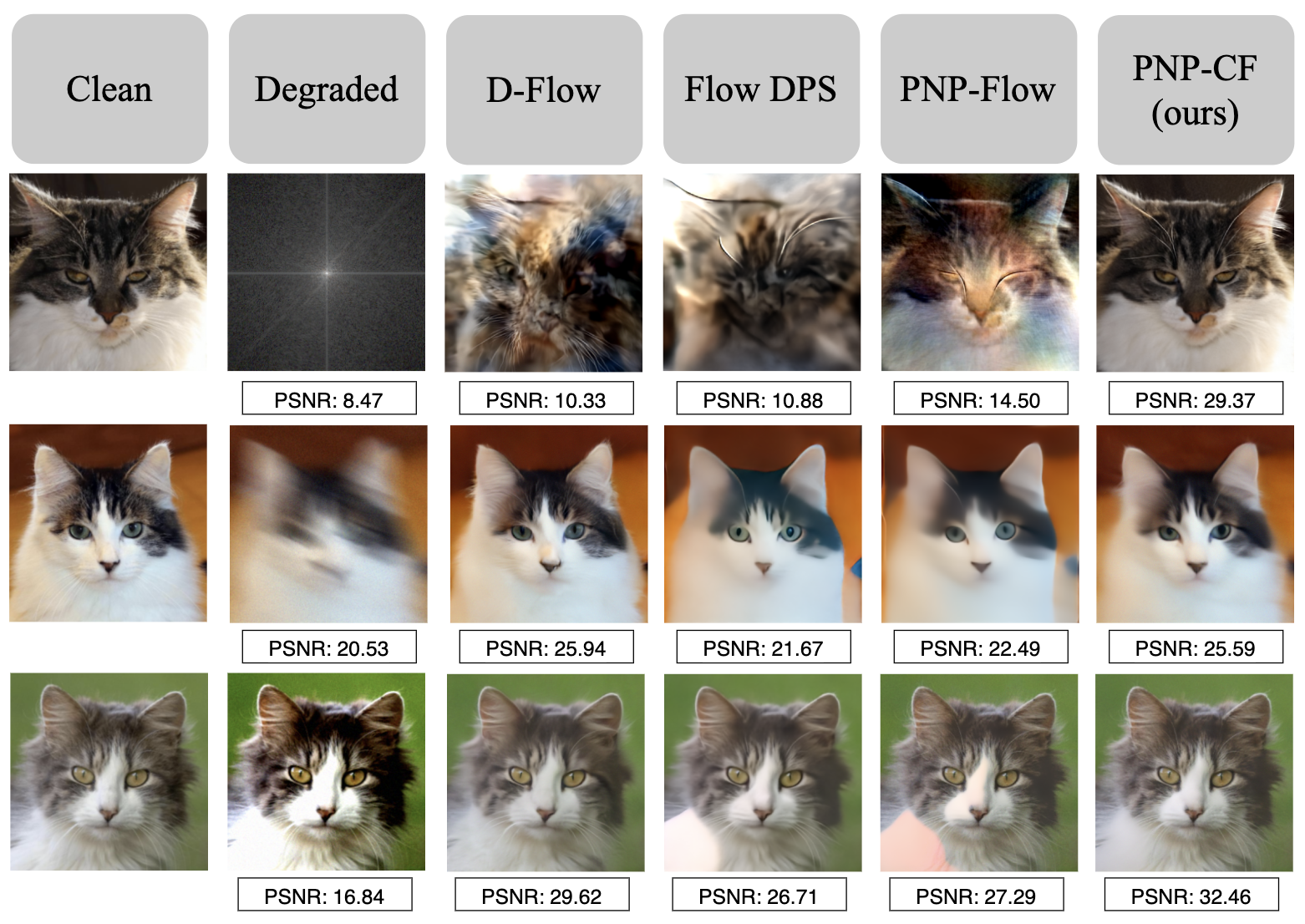}
    \caption{Additional qualitative comparisons on the AFHQ-Cat dataset across nonlinear inverse tasks, including 
    phase retrieval (1st row), nonlinear deblurring (2nd row) and HDR reconstruction (3rd row).}
\end{figure}

\clearpage
\section{Additional results on segmentation forward operator}
\label{appendix:segmentation_additional}

\begin{figure}[h]
    \centering
    \includegraphics[width=0.9\textwidth,height=\textheight,keepaspectratio]{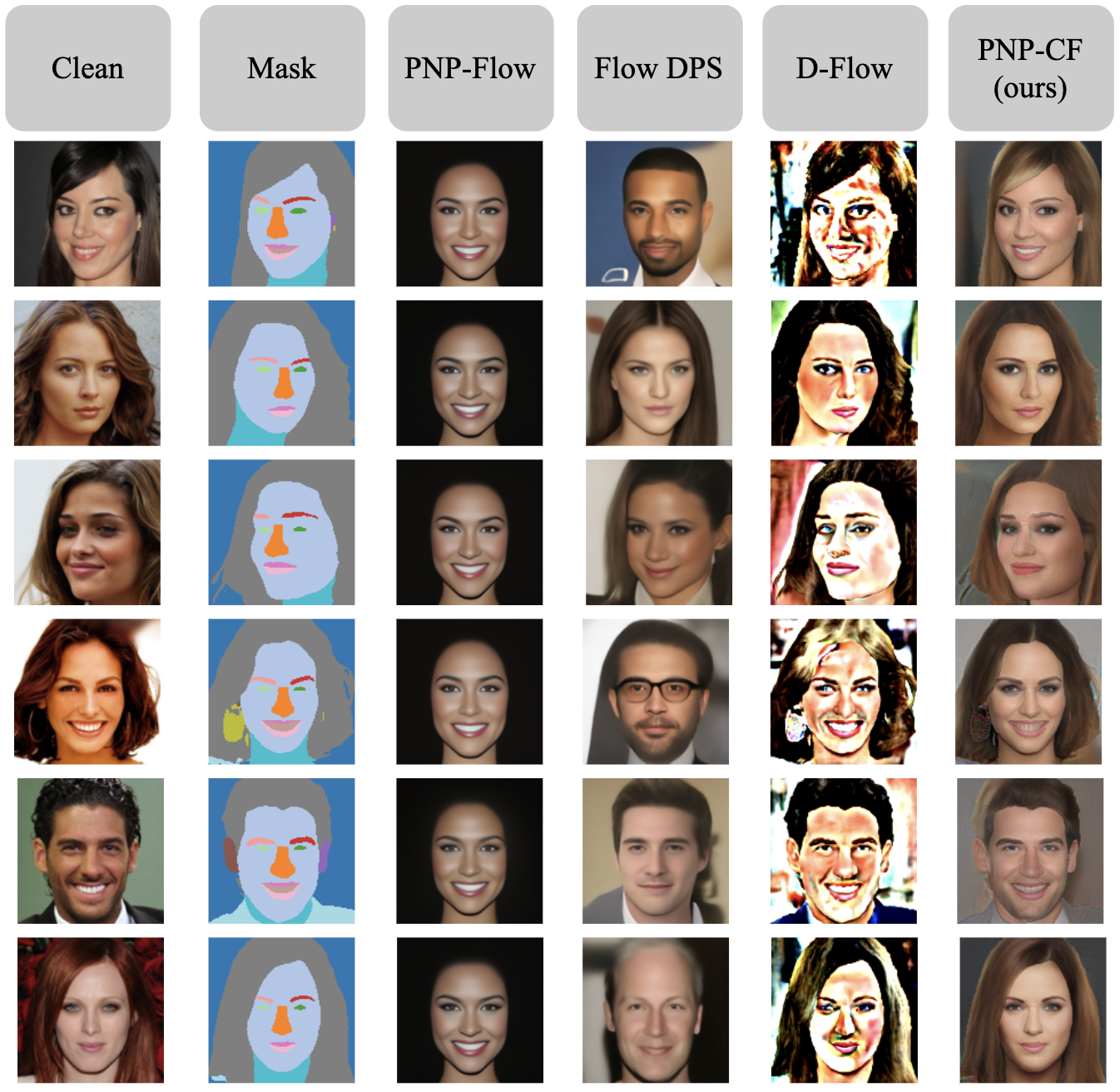}
\caption{Qualitative comparison for segmentation-guided reconstruction on CelebA-HQ.
From left to right: reference image, target segmentation mask, and reconstructions from PnP-Flow, FlowDPS, D-Flow, and PnP-CF.
PnP-Flow exhibits mode collapse across different masks, while FlowDPS produces plausible but weakly conditioned reconstructions.
D-Flow follows the segmentation constraint more closely but introduces severe visual artifacts.
In contrast, PnP-CF produces visually plausible reconstructions while maintaining strong consistency with the target segmentation.}
\end{figure}

%% file: appendix_conjugate_gradient.tex
\section{Forward-Consistency Solver}
\label{appendix:forward_models}

\subsection{The $\rvx$-Update under Additive Gaussian Noise}

Under additive Gaussian noise with variance $\sigma^2$, the forward-consistency term takes the form of a scaled squared residual,
\begin{equation}
    l(\rvy;\rvx) = \frac{1}{2\sigma^2}\|\mathcal{A}(\rvx) - \rvy\|_2^2.
\end{equation}
Inserting this expression into the $\rvx$-update of \eqref{eq:admm_updates} leads to
\begin{equation}
    \hat{\rvx}^{k+1}
    = \arg\min_{\rvx} \;
    \frac{1}{2\sigma^2}\|\mathcal{A}(\rvx) - \rvy\|_2^2
    + \frac{\rho_k}{2}\|\rvx - \tilde{\rvx}^k\|_2^2.
    \label{eq:x_subproblem}
\end{equation}
The objective balances two terms: the data-fidelity term penalizes deviations of $\mathcal{A}(\rvx)$ from the measurements, while the proximal term keeps $\rvx$ close to the current ADMM anchor $\tilde{\rvx}^k$, with a strength set by the scheduled penalty $\rho_k$ of the main algorithm.
How \eqref{eq:x_subproblem} is solved depends on the forward operator.
For linear $\mathcal{A}$, the problem is a strongly convex quadratic, which we solve with conjugate gradients (CG; appendix \ref{app:cg}).
For nonlinear $\mathcal{A}$, no closed-form minimizer is available, and we instead run a bounded number of gradient-descent iterations in which the forward-consistency gradient is obtained through a vector-Jacobian product (appendix \ref{app:nonlinear_gd}).

\subsection{Linear Forward Operators: Conjugate Gradient}
\label{app:cg}

\subsubsection{Normal Equations}
For a linear operator $\mathcal{A}$, the objective in \eqref{eq:x_subproblem} is a strongly convex quadratic, so its minimizer is unique and is characterized by a vanishing gradient,
\begin{equation}
\nabla_{\rvx}\left[
\frac{1}{2\sigma^2}\|\mathcal{A}\rvx - \rvy\|_2^2
+ \frac{\rho_k}{2}\|\rvx - \tilde{\rvx}^k\|_2^2
\right] = \mathbf{0}.
\end{equation}
Expanding the gradient and rearranging the terms gives the linear system
\begin{equation}
\underbrace{\left(
    \frac{1}{\sigma^2}\mathcal{A}^\top \mathcal{A} + \rho_k \mI
\right)}_{\mathcal{M}}
\rvx
=
\underbrace{\frac{1}{\sigma^2}\mathcal{A}^\top \rvy
+ \rho_k\tilde{\rvx}^k}_{\rvb},
\label{eq:linear_system_cg}
\end{equation}
which we write compactly as $\mathcal{M}\rvx = \rvb$, with
\begin{equation}
    \mathcal{M}(\rvx)
    = \frac{1}{\sigma^2}\mathcal{A}^\top(\mathcal{A}\rvx) + \rho_k\,\rvx,
    \qquad
    \rvb
    = \frac{1}{\sigma^2}\mathcal{A}^\top \rvy + \rho_k\tilde{\rvx}^k.
\end{equation}
The operator $\mathcal{M}$ is symmetric, since both $\mathcal{A}^\top\mathcal{A}$ and $\rho_k\mI$ are self-adjoint. It is also positive definite, since for every $\rvx \neq \mathbf{0}$
\begin{equation}
    \langle \rvx,\, \mathcal{M}(\rvx) \rangle
    = \frac{1}{\sigma^2}\|\mathcal{A}\rvx\|_2^2 + \rho_k\|\rvx\|_2^2 > 0.
\end{equation}
Consequently, \eqref{eq:linear_system_cg} has a unique solution and can be solved with CG.

\subsubsection{CG Implementation}
\label{parameters_detail}

At the image resolutions considered here, the system matrix is too large to be formed or inverted explicitly.
CG \citep{hestenes1952methods,boyd2004convex} avoids this, since it accesses $\mathcal{M}$ only through matrix-vector products, each costing one application of $\mathcal{A}$ and one of its adjoint $\mathcal{A}^\top$.
We warm-start CG from the previous iterate $\hat{\rvx}^k$ and stop once the relative residual satisfies
\begin{equation}
    \frac{\|\rvr^{(j)}\|_2}{\|\rvb\|_2 + \delta} < \texttt{cg\_tol}.
\end{equation}
Here, $\rvr^{(j)} = \rvb - \mathcal{M}\rvx^{(j)}$ is the residual after $j$ CG iterations, $\rvb$ is the right-hand side of \eqref{eq:linear_system_cg}, and the small constant $\delta > 0$ prevents division by zero.
If the tolerance is not reached, CG stops after at most \texttt{max\_cgiter} iterations.
Throughout our experiments, we set $\texttt{max\_cgiter}=10$ and $\texttt{cg\_tol}=10^{-5}$.
The base penalty $\rho$ and the schedule exponent $\beta$, in contrast, are task-dependent (table \ref{tab:hyperparams}), and each CG solve at iteration $k$ uses the resulting scheduled value $\rho_k$.

\subsection{Nonlinear Forward Operators: Gradient Descent with Backtracking}
\label{app:nonlinear_gd}

For a nonlinear $\mathcal{A}$, the subproblem \eqref{eq:x_subproblem} no longer has a closed-form minimizer.
We therefore approximate the minimizer by a fixed number of gradient-descent iterations on
\begin{equation}
    f(\rvx) = \frac{1}{2\sigma^2}\|\mathcal{A}(\rvx) - \rvy\|_2^2
    + \frac{\rho_k}{2}\|\rvx - \tilde{\rvx}^k\|_2^2,
    \label{eq:nonlinear_obj}
\end{equation}
whose gradient is
\begin{equation}
    \nabla f(\rvx) = \frac{1}{\sigma^2}\,
    \mathrm{vjp}_{\mathcal{A}}\bigl(\rvx,\, \mathcal{A}(\rvx)-\rvy\bigr)
    + \rho_k\bigl(\rvx - \tilde{\rvx}^k\bigr).
    \label{eq:nonlinear_grad}
\end{equation}
The term $\mathrm{vjp}_{\mathcal{A}}(\rvx, \rvv) = \bigl(\partial \mathcal{A}(\rvx)/\partial \rvx\bigr)^\top \rvv$ denotes the vector-Jacobian product of $\mathcal{A}$ at $\rvx$ applied to the residual $\rvv$, and is computed by automatic differentiation.
It takes the place of the explicit adjoint $\mathcal{A}^\top$ used in the linear case, without ever forming the full Jacobian.

A fixed step size is not reliable across all $(\rho_k, \sigma)$ combinations: the curvature of the data-fidelity term in \eqref{eq:nonlinear_obj} scales as $1/\sigma^2$ and, depending on the task, may exceed the curvature $\rho_k$ of the proximal term.
We therefore choose the step size by backtracking line search under the Armijo sufficient-decrease condition \citep{armijo1966minimization}.
Starting from $\tau = 1/\rho_k$, each gradient step proposes
\begin{equation}
    \rvx_{\text{try}} = \rvx - \tau\,\nabla f(\rvx),
\end{equation}
which is accepted as soon as
\begin{equation}
    f(\rvx_{\text{try}}) \le f(\rvx) - \tfrac{1}{2}\tau\,\|\nabla f(\rvx)\|_2^2;
\end{equation}
otherwise, $\tau$ is halved, for at most 20 halvings.
The next of the $\texttt{admm\_gd\_steps}$ gradient steps is warm-started from the accepted step size, doubled and capped at $1/\rho_k$, which lets later steps recover toward the initial upper bound after an overly conservative early step.
We use $\texttt{admm\_gd\_steps}=10$ for HDR reconstruction and phase retrieval and $\texttt{admm\_gd\_steps}=5$ for nonlinear deblurring, identically on both datasets.

%% file: appendix_experiments.tex
\section{Experimental Details}
\label{settings}

The overall experimental setup is described in section \ref{experiments}.
In this section, we provide additional details on the implementation of the forward operators used in our experiments, covering linear, nonlinear, and segmentation-based operators. 
We also describe how PnP-Flow and FlowDPS are adapted to the nonlinear forward models considered in our evaluation.
Finally, we report the implementation details and hyperparameter settings for PnP-CF and all baseline methods to facilitate reproducibility.

\subsection{Forward Operators with Gaussian Noise}
\label{app:forwardops}

For box and random inpainting, $\mathcal{A}$ is implemented as element-wise multiplication with the corresponding binary mask. 
For super-resolution, $\mathcal{A}$ is implemented as spatial downsampling, while for nonlinear deblurring, the forward operator is implemented using a frozen pretrained KernelWizard network with the GOPRO-wVAE checkpoint. 
Since KernelWizard operates on images in $[0,1]$, inputs are mapped from the normalized range $[-1,1]$ to $[0,1]$ before the forward pass and mapped back afterward. 
The network parameters remain fixed during reconstruction, while gradients are propagated only with respect to the input image. 
The blur condition is represented by a fixed latent kernel code shared across all evaluated images.
For phase retrieval, the input image is symmetrically zero-padded before
applying an orthonormal two-dimensional Fourier transform, after which only
the magnitude of the complex Fourier coefficients is retained. 
Since the phase information is discarded, the resulting mapping is nonlinear and highly
non-injective.
For HDR reconstruction, the forward operator first scales the image intensity and subsequently clips the result to the valid normalized range $[-1,1]$,
thereby simulating saturation and loss of information in over-exposed regions.

\subsection{Segmentation-guided Reconstruction}
\label{app:segmentation}

We evaluate segmentation-guided reconstruction on $100$ test images from CelebA-HQ \citep{karras2018progressive}, resized to $128\times128$, using $C=19$ classes. 
We use a frozen U-Net-based \citep{ronneberger2015u} semantic segmentation network trained on CelebAMask-HQ~\citep{lee2020maskgan} as the nonlinear forward operator, and a flow-matching prior originally trained on CelebA and fine-tuned on CelebA-HQ. The fine-tuning set consists of $30$k CelebA-HQ samples, split into $28.5$k images used for fine-tuning and $1.5$k held out for testing; the $100$ evaluation images are drawn from this held-out test split. 
Unlike the other inverse problems, the observation is categorical and there is no Gaussian measurement noise. 
For each reference image $\rvx^\star$, we generate the target segmentation mask by applying the forward operator and taking the most likely class at every pixel $\rvy_i = \arg\max_{c}\mathcal{A}(\rvx^\star)_{i,c}$ where $i$ indexes pixels.

Directly enforcing the one-hot segmentation mask throughout the trajectory can over-constrain early iterates, which may still be far from a plausible image. We therefore construct a softened target distribution by assigning a small amount of probability mass to a predefined set of semantically neighboring classes. Let $\rvy_{i,c}$ denote the one-hot encoding of the observed class at pixel $i$, and let $q(c\mid\rvy_i)$ denote the uniform distribution over the predefined neighbors of class $\rvy_i$. At iteration $k$, the target distribution is
\begin{equation}
\tilde{\rvy}^{k}_{i,c} = (1-\alpha_k)\rvy_{i,c} + \alpha_k q(c\mid\rvy_i),\qquad
\alpha_k=0.2\left(1-\frac{k}{K-1}\right)^3 .
\end{equation}
Thus, the target is moderately softened ($\alpha_0=0.2$) at the beginning of the trajectory and progressively approaches the original one-hot mask, with $\alpha_k\rightarrow0$ at the final iteration.

Using the same equation as in section \ref{segmentation}, the segmentation forward-consistency term is given as 
$l_{\mathrm{seg}}(\tilde{\rvy}^{k};\rvx)
=-\sum_i\sum_{c=1}^{C}\tilde{\rvy}^{k}_{i,c}\log \mathcal{A}(\rvx)_{i,c}$.
The corresponding $\rvx$-update approximately minimizes
$\lambda_{\mathrm{CE}}\,l_{\mathrm{seg}}(\tilde{\rvy}^{k};\rvx)
+\frac{\rho_{k}}{2}\|\rvx-(\rvz^k-\rvu^k)\|_2^2$.
Here, $\lambda_{\mathrm{CE}}$ controls the relative contribution of the cross-entropy forward-consistency term with respect to the ADMM quadratic penalty, and is set to $\lambda_{\mathrm{CE}}=10$ for all segmentation experiments.
The remaining optimization and schedule hyperparameters follow the same formulation as for the other inverse problems, with the task-specific values reported in table \ref{tab:hyperparams}.

\begin{table}[t]
\centering

\caption{Hyperparameter settings per task and dataset for the time-adaptive
PnP-CF variant. $K$: number of PnP-CF iterations (\texttt{steps\_pnp}); $\rho$:
base ADMM penalty (\texttt{rho\_admm}); $\beta$: penalty-schedule exponent
(\texttt{beta\_rho}); $\gamma$: time-schedule exponent
(\texttt{t\_schedule\_gamma}); Denoiser calls: number of denoiser samples
averaged per iteration (\texttt{num\_denoiser\_calls}); GD steps: nonlinear
x-update gradient steps (\texttt{admm\_gd\_steps}, linear tasks use CG instead).}

\label{tab:hyperparams}
\small
\setlength{\tabcolsep}{3pt}
\begin{tabular}{llcccccc}
\toprule
Dataset & Task & $K$ & $\rho$ & $\beta$ & $\gamma$ & Denoiser calls & GD steps \\
\midrule
CelebA & Box inpainting       & 100 & 50   & 2.5 & 1.0 & 5 & -- \\
CelebA & Random inpainting    & 100 & 100  & 2.5 & 1.0 & 5 & -- \\
CelebA & Super-resolution     & 100 & 50   & 1.3 & 1.0 & 5 & -- \\
CelebA & Phase retrieval      & 100 & 1500 & 1.8 & 1.5 & 5 & 10 \\
CelebA & HDR reconstruction   & 100 & 100  & 1.5 & 0.2 & 5 & 10 \\
CelebA & Nonlinear deblurring & 200 & 100  & 1.5 & 1.5 & 1 & 5 \\
\midrule
AFHQ-Cat & Box inpainting       & 100 & 50   & 2.5 & 1.5 & 5 & -- \\
AFHQ-Cat & Random inpainting    & 100 & 100  & 2.5 & 1.0 & 5 & -- \\
AFHQ-Cat & Super-resolution     & 500 & 50   & 1.3 & 1.0 & 5 & -- \\
AFHQ-Cat & Phase retrieval      & 300 & 1500 & 2.2 & 1.9 & 5 & 10 \\
AFHQ-Cat & HDR reconstruction   & 100 & 500  & 1.5 & 1.0 & 5 & 10 \\
AFHQ-Cat & Nonlinear deblurring & 250 & 100  & 1.5 & 1.5 & 1 & 5 \\
\midrule
CelebA-HQ & Segmentation guided & 500 & 500  & 0 & 1.0 & 1 & 10 \\
\bottomrule
\end{tabular}
\end{table}

\subsection{Hyperparameters}
\label{app:hyperparams_intro}

The main hyperparameter in our method is the penalty parameter $\rho$, and to further improve reconstruction performance, we consider the schedules $\rho_k=\rho/(1-t_k)^\beta$ and $t_k=(k/K)^\gamma$, where $\beta$ controls the penalty growth as $t_k\to1$, while $\gamma$ controls the progression along the flow trajectory.

For nonlinear deblurring, we observed a small improvement in reconstruction quality from an optional early-stopping at $85\%$ of the configured iteration budget $K$, corresponding to $k=0.85K$. This choice is not critical to the method and yields only a minor gain; all other tasks run for the full $K$ iterations.

For phase retrieval, two random restarts are used to reduce sensitivity to initialization, following the restart strategy commonly adopted for this task in DPS-based inverse-problem settings \citep{chung2024diffusion}. 
This restart procedure is applied consistently to all evaluated methods, and for each image the reconstruction with the lowest final measurement residual $\|\mathcal{A}(\rvx)-\rvy\|_2$ is retained.

Table \ref{tab:hyperparams} reports the full hyperparameter configuration used for each task and dataset, corresponding to the time-adaptive variant of PnP-CF reported in table \ref{tab:results}. 
All linear tasks share $\texttt{max\_cgiter}=10$ and $\texttt{cg\_tol}=10^{-5}$ (\ref{app:cg}); GD steps apply only to the nonlinear tasks (\ref{app:nonlinear_gd}).

\subsubsection{\texorpdfstring{Role of $\gamma$ and $\beta$}{Role of gamma and beta}}
\label{app:role_hyperparams}
The schedule parameters $\gamma$ and $\beta$ control two distinct aspects of the
algorithm and are therefore treated as separate hyperparameters. 
The parameter $\gamma$ determines how the iteration index is mapped to the
flow time, $t_k = \left(\frac{k}{K}\right)^{\gamma},$ and thus controls the rate at which the iterates progress along the learned flow trajectory: $\gamma = 1$ gives a linear progression, $\gamma > 1$ keeps the iterates at earlier flow times longer before accelerating near the end, and $\gamma < 1$ advances more rapidly toward later flow times — with $\gamma < 1$ correspondingly favoring the deterministic component $\rvepsilon_d$ of the pullback step earlier in the trajectory, and $\gamma > 1$ retaining the stochastic component $\rvepsilon_s$ for longer.

In contrast, $\beta$ acts only after $t_k$ has been determined, through the ADMM penalty schedule
\[
\rho_k = \frac{\rho}{(1-t_k)^{\beta}} = \frac{\rho}{\left(1-(k/K)^{\gamma}\right)^{\beta}},
\]
and controls how aggressively the consensus penalty grows as the algorithm
approaches the end of the trajectory.

The distinction between these roles can be seen from the terminal growth rate and the onset of penalty growth.
As $k \to K$, writing $\varepsilon = 1-k/K \to 0^{+}$, we have
$t_k = (1-\varepsilon)^{\gamma} \approx 1-\gamma\varepsilon$ and hence
\[
\rho_k \;\approx\; \rho\,\gamma^{-\beta}\,\varepsilon^{-\beta}, \qquad k\to K,
\]
so, for the unclipped schedule, the asymptotic order of the terminal penalty
growth is determined by $\beta$, while $\gamma$ enters only through the
multiplicative factor $\gamma^{-\beta}$ near $k=K$. 
Meanwhile, for fixed $\beta$, the onset of penalty growth, defined here as the
iteration fraction $s^{*}=k^{*}/K$ at which $\rho_{k^{*}}$ has doubled relative
to $\rho$, is
\[
s^{*} = \left(1-2^{-1/\beta}\right)^{1/\gamma},
\]
which is governed by $\gamma$ for any fixed $\beta$. 
Consequently, a single schedule parameter cannot
independently control both properties: varying $\beta$ changes the terminal
growth rate but also shifts its onset, whereas varying $\gamma$ can shift the
onset without changing the asymptotic growth order. 
Adjusting both parameters therefore provides separate degrees of freedom for
controlling the timing and strength of the penalty growth.
For example, a penalty that remains moderate for most iterations
but grows aggressively near the end can be obtained by jointly controlling
the delayed onset through $\gamma$ and the terminal growth rate through
$\beta$.

Beyond this functional distinction, the two parameters are also mathematically independent. 
Suppose two parameter pairs
$(\beta_1,\gamma_1)$ and $(\beta_2,\gamma_2)$ induced the same penalty
schedule at every iteration, i.e.
\[
\left(1-s^{\gamma_1}\right)^{-\beta_1} = \left(1-s^{\gamma_2}\right)^{-\beta_2}
\qquad \forall\, s\in(0,1).
\]

\begin{proposition}
If the above identity holds for all $s\in(0,1)$ with
$\beta_1,\beta_2,\gamma_1,\gamma_2 > 0$, then $\beta_1=\beta_2$ and
$\gamma_1=\gamma_2$.
\end{proposition}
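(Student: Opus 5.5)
Taking logarithms turns the identity into
\[
\beta_1 \log\bigl(1-s^{\gamma_1}\bigr) = \beta_2 \log\bigl(1-s^{\gamma_2}\bigr) \qquad \forall\, s\in(0,1),
\]
since both sides of the original identity are positive there. I would then compare the behaviour of the two sides at the two endpoints of $(0,1)$. The endpoint $s\to 0^+$ will pin down $\gamma_1=\gamma_2$, and then $\beta_1=\beta_2$ follows by direct division.

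\textbf{Step 1: the limit $s\to 0^+$.} Using $\log(1-x) = -x + O(x^2)$ as $x\to0$, the left side equals $-\beta_1 s^{\gamma_1}(1+o(1))$ and the right side equals $-\beta_2 s^{\gamma_2}(1+o(1))$. Their ratio is therefore
\[
\frac{\beta_1}{\beta_2}\, s^{\gamma_1-\gamma_2}\,(1+o(1)),
\]
and this ratio must be identically $1$. Suppose $\gamma_1\neq\gamma_2$. Then $s^{\gamma_1-\gamma_2}$ tends either to $0$ or to $\infty$, while $\beta_1/\beta_2$ is a fixed positive constant, so the ratio cannot stay equal to $1$. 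This is a contradiction, hence $\gamma_1=\gamma_2$.

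\textbf{Step 2: conclude $\beta_1=\beta_2$.} With $\gamma:=\gamma_1=\gamma_2$, pick any $s\in(0,1)$. There $\log(1-s^\gamma)<0$, so in particular it is nonzero, and dividing both sides by it gives $\beta_1=\beta_2$.

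\textbf{Where care is needed.} The only place requiring attention is Step 1: we must justify dividing by $s^{\gamma_2}$ and passing to the limit. This is legitimate because all parameters are strictly positive, so $s^{\gamma_i}\to 0$ as $s\to 0^+$ and the Taylor expansion of $\log(1-x)$ applies to both sides. As an alternative, one could cross-check using the endpoint $s\to 1^-$. Writing $\varepsilon = 1-s$, we have $1-s^\gamma\approx \gamma\varepsilon$, and matching the coefficients of $\log\varepsilon$ on the two sides gives $\beta_1=\beta_2$ directly, in the spirit of the terminal-growth computation above. After that, the $s\to0^+$ argument yields $\gamma_1=\gamma_2$ without needing the division in Step 2. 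Either ordering works.
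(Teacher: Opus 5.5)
Your proposal is correct and matches the paper's proof essentially step for step. You take logarithms, use $\log(1-x)=-x+O(x^2)$ as $s\to0^+$ to show the ratio behaves like $s^{\gamma_1-\gamma_2}$ (forcing $\gamma_1=\gamma_2$), and then divide by the nonzero $\log(1-s^{\gamma})$ to get $\beta_1=\beta_2$; the $s\to1^-$ alternative you sketch is also valid but not needed.
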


\begin{proof}
Taking logarithms, the identity is equivalent to
\[
\frac{\beta_1}{\beta_2} = \frac{\ln\!\left(1-s^{\gamma_2}\right)}{\ln\!\left(1-s^{\gamma_1}\right)}
=: R(s), \qquad \forall\, s \in (0,1).
\]
Since the left-hand side is a fixed constant, $R(s)$ must be constant on
$(0,1)$. Using $\ln(1-x) = -x + O(x^2)$ as $x\to 0^{+}$,
\[
R(s) = \frac{s^{\gamma_2} + O\!\left(s^{2\gamma_2}\right)}{s^{\gamma_1} + O\!\left(s^{2\gamma_1}\right)}
= s^{\gamma_2-\gamma_1}\bigl(1+o(1)\bigr), \qquad s \to 0^{+}.
\]
If $\gamma_1 \neq \gamma_2$, this expression tends to either $0$ or $\infty$
as $s \to 0^{+}$, contradicting that $R$ is constant. Hence
$\gamma_1 = \gamma_2 =: \gamma$. Substituting back, the original identity
reduces to $\beta_1 \ln(1-s^{\gamma}) = \beta_2 \ln(1-s^{\gamma})$ for all
$s \in (0,1)$; since $\ln(1-s^{\gamma}) \neq 0$ on $(0,1)$, this forces
$\beta_1 = \beta_2$.
\end{proof}

Hence no reparametrization of $\gamma$ can absorb a change in $\beta$, or vice
versa: the two parameters are distinct, non-redundant degrees of freedom.
The parameter $\gamma$ governs the temporal progression along the flow
trajectory and influences the onset of penalty growth, whereas $\beta$
determines the strength and asymptotic growth rate of the ADMM consensus
penalty.

\subsection{Adapting baselines to Nonlinear Operators}
\label{app:baseline_adaptation}

PnP-Flow \citep{martin2025pnpflow} and FlowDPS
\citep{kim2025flowdps} were originally evaluated only on linear inverse
problems.
For PnP-Flow, the linear forward-consistency gradient involves the adjoint
$\mathcal{A}^\top$. We extend this update to nonlinear operators by replacing
the adjoint-based gradient with the corresponding vector-Jacobian product
(VJP), $\mathrm{vjp}_{\mathcal{A}}(\rvx,\rvv) =
\left(\frac{\partial\mathcal{A}(\rvx)}{\partial\rvx}\right)^\top \rvv,$
computed via automatic differentiation. For linear $\mathcal{A}$, the VJP
reduces exactly to $\mathcal{A}^\top\rvv$; for nonlinear $\mathcal{A}$, it
provides the required forward-consistency gradient without requiring a closed-form
adjoint. This is the same mechanism used in PnP-CF's nonlinear
forward-consistency step (\ref{app:nonlinear_gd}).

For FlowDPS, we follow the practical strategy of the authors and perform a
fixed number of gradient-descent updates on the forward-consistency objective.
For nonlinear operators, we compute the gradient of
$\|\rvy-\mathcal{A}(\rvz)\|_2^2$ using the VJP of $\mathcal{A}$.
This preserves the key FlowDPS approximation that avoids differentiating
through the flow-matching denoiser network.

D-Flow \citep{benhamu2024dflow} requires no structural adaptation to handle nonlinear operators: it optimizes the initial noise directly by backpropagating through the full generation trajectory and the forward operator via standard automatic differentiation, which places no restriction on the linearity of $\mathcal{A}$. 
However, D-Flow's own published experiments on image inverse problems cover only linear
degradations (inpainting, super-resolution, Gaussian deblurring); its reported validation on nonlinear problems is limited to non-image domains (audio, molecule generation). 
We therefore run D-Flow on our nonlinear image tasks using its unmodified optimization procedure.

Tables~\ref{tab:hyperparams_baselines_celeba} and \ref{tab:hyperparams_baselines_afhq} report the full per-task hyperparameter settings used for all baselines on CelebA and AFHQ-Cat, respectively. 
For the nonlinear tasks, we tuned PnP-Flow, FlowDPS, and D-Flow to obtain the best reconstruction quality achievable with each method, while keeping their iteration budgets and computational cost comparable to PnP-CF whenever possible.

\begin{table}[h]
\centering
\caption{Hyperparameters used for all baseline methods on the CelebA
dataset.}
\label{tab:hyperparams_baselines_celeba}
\scriptsize
\begin{tabular}{l cccccc}
\toprule
& Box Inpaint. & Super-res. & Rand. Inpaint. & HDR Recon. & Phase Retr. & Nonl. Deblur. \\
\midrule
\rowcolor{gray!20} OT-ODE & & & & & & \\
$t_0$ (start time)     & 0.1 & 0.1 & 0.1 & N/A & N/A & N/A \\
$\gamma$ schedule      & $\gamma_t$ & constant & constant & N/A & N/A & N/A \\
$K$ (steps)            & 100 & 100 & 100 & N/A & N/A & N/A \\
\midrule
\rowcolor{gray!20} Flow Priors & & & & & & \\
$\lambda$ (regularization) & 10{,}000 & 10{,}000 & 10{,}000 & N/A & N/A & N/A \\
$\eta$ (learning rate)     & 0.01 & 0.1 & 0.01 & N/A & N/A & N/A \\
\midrule
\rowcolor{gray!20} D-Flow & & & & & & \\
$\lambda$ (regularization) & 0.001 & 0.001 & 0.01 & 0.01 & 0.01 & 0.01 \\
$\alpha$ (blending)        & 0.1   & 0.1   & 0.1  & 0.1  & 0.1  & 0.1  \\
$n_{\mathrm{iter}}$ (max.\ iter.) & 9 & 10 & 20 & 10 & 10 & 10 \\
\midrule
\rowcolor{gray!20} FlowDPS & & & & & & \\
Grad.\ step size & 2.0 & 2.5 & 3.0 & 0.50 & 2.0 & 0.5 \\
Grad.\ steps      & 3   & 3   & 3   & 3    & 3   & 3   \\
$K$ (steps)       & 100 & 100 & 100 & 100  & 100 & 200 \\
\midrule
\rowcolor{gray!20} FLOWER & & & & & & \\
Denoiser calls & 5   & 5   & 5   & N/A & N/A & N/A \\
$K$ (steps)   & 100 & 100 & 100 & N/A & N/A & N/A \\
\midrule
\rowcolor{gray!20} PnP-Flow & & & & & & \\
$\alpha$ (learning rate factor) & 0.5 & 0.3 & 0.01 & 0.3 & 0.01 & 0.5 \\
Denoiser calls                  & 5   & 5   & 5    & 5   & 5    & 5   \\
$K$ (steps)                     & 100 & 100 & 100  & 100 & 100  & 200 \\
\bottomrule
\end{tabular}
\end{table}

\begin{table}[t]
\centering
\caption{Hyperparameters used for all baseline methods on the AFHQ-Cat
dataset.}
\label{tab:hyperparams_baselines_afhq}
\scriptsize
\begin{tabular}{l cccccc}
\toprule
& Box Inpaint. & Super-res. & Rand. Inpaint. & HDR Recon. & Phase Retr. & Nonl. Deblur.\\
\midrule
\rowcolor{gray!20} OT-ODE & & & & & & \\
$t_0$ (start time)     & 0.1 & 0.1 & 0.1 & N/A & N/A & N/A \\
$\gamma$ schedule      & $\gamma_t$ & constant & constant & N/A & N/A & N/A \\
$K$ (steps)            & 100 & 500 & 100 & N/A & N/A & N/A \\
\midrule
\rowcolor{gray!20} Flow Priors & & & & & & \\
$\lambda$ (regularization) & 10{,}000 & 10{,}000 & 10{,}000 & N/A & N/A & N/A \\
$\eta$ (learning rate)     & 0.01 & 0.1 & 0.01 & N/A & N/A & N/A \\
\midrule
\rowcolor{gray!20} D-Flow & & & & & & \\
$\lambda$ (regularization) & 0.01  & 0.001 & 0.001 & 0.01 & 0.01 & 0.01 \\
$\alpha$ (blending)        & 0.1   & 0.1   & 0.1   & 0.1  & 0.1  & 0.1  \\
$n_{\mathrm{iter}}$ (max.\ iter.) & 9 & 20 & 20 & 10 & 10 & 10 \\
\midrule
\rowcolor{gray!20} FlowDPS & & & & & & \\
Grad.\ step size & 2.0 & 2.5 & 3.0 & 0.5 & 0.5 & 0.10 \\
Grad.\ steps      & 3   & 3   & 3   & 3   & 3   & 3    \\
$K$ (steps)       & 100 & 500 & 100 & 100 & 300 & 250  \\
\midrule
\rowcolor{gray!20} FLOWER & & & & & & \\
Denoiser calls & 5   & 5   & 5   & N/A & N/A & N/A \\
$K$ (steps)   & 100 & 500 & 100 & N/A & N/A & N/A \\
\midrule
\rowcolor{gray!20} PnP-Flow & & & & & & \\
$\alpha$ (learning rate factor) & 0.5 & 0.01 & 0.01 & 0.3 & 0.01 & 0.8 \\
Denoiser calls                  & 5   & 5    & 5    & 5   & 5    & 5   \\
$K$ (steps)                     & 100 & 500  & 100  & 100 & 300  & 250 \\
\bottomrule
\end{tabular}
\end{table}

\section{Computational Complexity and Efficiency}
\label{appendix:computational}

In this section, we analyze the computational efficiency and memory footprint of all methods.
Experiments are conducted on a single NVIDIA A100-SXM4-80GB GPU (CUDA 13.0), using a batch size of 4 images and averaged over 25 batches keeping the same hyperparameter settings as used in the main results (\ref{tab:results}).
Table \ref{tab:efficiency} reports per-batch runtime and peak GPU memory usage, averaged separately over the linear tasks (box inpainting, random inpainting, super-resolution) and the nonlinear tasks (HDR reconstruction, phase retrieval, nonlinear deblurring).

\begin{table*}[h]
\centering
\caption{Average per-batch runtime (in seconds) and peak GPU memory (in GB) on CelebA and AFHQ-Cat, grouped by linear tasks (box inpainting, random inpainting, super-resolution) and nonlinear tasks (HDR reconstruction, phase retrieval, nonlinear deblurring), each averaged over the three tasks in its group (batch size of 4 images, averaged over 25 batches). Best scores are \best{bold}.}
\label{tab:efficiency}
\begin{minipage}[t]{0.49\linewidth}
\centering
\setlength{\tabcolsep}{3pt}
\renewcommand{\arraystretch}{1.15}
\begin{adjustbox}{width=\linewidth}
\begin{tabular}{l cc cc}
\toprule
& \multicolumn{2}{c}{Linear} & \multicolumn{2}{c}{Nonlinear} \\
\cmidrule(lr){2-3} \cmidrule(lr){4-5}
Method & Time (s) & Memory (GB) & Time (s) & Memory (GB) \\
\midrule
PnP-CF (ours) & 16.81 & \best{0.32} & 32.62 & \best{0.46} \\
PnP-Flow      & 17.31          & \best{0.32} & 27.56          & \best{0.46} \\
OT-ODE        & 17.49          & 2.34 & --             & -- \\
D-Flow        & 87.77          & 13.36 & 92.19         & 17.84 \\
Flow Priors   & 35.66          & 8.81 & --             & -- \\
FlowDPS       & \best{10.83}   & 0.34 & \best{20.97} & 0.47 \\
FLOWER        & 19.13          & 0.33 & --             & -- \\
\bottomrule
\end{tabular}
\end{adjustbox}
\centering
\footnotesize CelebA
\end{minipage}%
\hfill
\begin{minipage}[t]{0.49\linewidth}
\centering
\setlength{\tabcolsep}{3pt}
\renewcommand{\arraystretch}{1.15}
\begin{adjustbox}{width=\linewidth}
\begin{tabular}{l cc cc}
\toprule
& \multicolumn{2}{c}{Linear} & \multicolumn{2}{c}{Nonlinear} \\
\cmidrule(lr){2-3} \cmidrule(lr){4-5}
Method & Time (s) & Memory (GB) & Time (s) & Memory (GB) \\
\midrule
PnP-CF (ours) & 50.52          & \best{0.85} & 69.11          & \best{0.99} \\
PnP-Flow      & 46.17          & \best{0.85} & 63.93 & \best{0.99} \\
OT-ODE        & 44.96 & 8.55        & --             & -- \\
D-Flow        & 141.75         & 50.47       & 113.08         & 50.80 \\
Flow Priors   & 75.97          & 33.30       & --             & -- \\
FlowDPS       & \best{35.01}   & 0.87        & \best{47.61}   & 1.01 \\
FLOWER        & 59.71          & 0.86        & --             & -- \\
\bottomrule
\end{tabular}
\end{adjustbox}
\centering
\footnotesize AFHQ-Cat
\end{minipage}
\end{table*}

D-Flow and Flow Priors incur significant computational and memory overhead, limiting practical efficiency.
D-Flow is expensive because it repeatedly solves the full ODE and backpropagates through the entire trajectory to optimize the initial noise.
This leads to high memory usage from storing intermediate activations and long runtime due to repeated full-trajectory differentiation.
Flow Priors is computationally expensive, as each ODE step involves multiple gradient updates, requiring repeated forward and backward
passes through the model along with costly Jacobian trace evaluations. This nested optimization leads to high overall runtime.
OT-ODE avoids full trajectory backpropagation by applying a forward-consistency correction at each ODE step through a vector-Jacobian product (VJP) with respect to the \emph{denoiser network}; because this network is large, computing the VJP still requires storing its intermediate activations at every step, incurring notable memory overhead despite avoiding full-trajectory differentiation.
Neither OT-ODE, Flow Priors, nor FLOWER report nonlinear extensions, so their nonlinear-task averages are omitted.

In contrast, PnP-Flow, FlowDPS, FLOWER, and PnP-CF generally incur lower computational cost and substantially lower memory usage than D-Flow and Flow Priors on the evaluated tasks.
In each case, the forward-consistency step avoids differentiating through the denoiser network.
Among the evaluated methods, FlowDPS achieves the lowest average runtime
across both linear and nonlinear tasks. 
PnP-CF solves the linear forward-consistency subproblem iteratively using
conjugate gradients (appendix \ref{app:cg}), and uses a bounded number of
gradient-descent steps with backtracking and VJPs for nonlinear operators
(appendix \ref{app:nonlinear_gd}), without backpropagating through the
denoiser in either case. 
These more accurate forward-consistency updates introduce additional runtime relative to FlowDPS, but keep PnP-CF's memory usage low and comparable to PnP-Flow and FlowDPS across both linear and nonlinear tasks, and to FLOWER on the linear tasks
(table \ref{tab:efficiency}).

%% file: ablation.tex
\section{Ablation Studies}
\label{ablation}



In this section, we further study the impact of the main design choices in PnP-CF through additional ablation experiments. 
In particular, we analyze the role of the pullback construction and the dual-variable update, which together control how the intermediate iterates balance exploration of the flow prior with consistency to the current reconstruction trajectory. 
These experiments complement the analysis presented in section \ref{pnpcf_analysis} and provide additional evidence for the contribution of each component.

First, we evaluate PnP-CF with only stochastic component ($\rvepsilon_s$). 
Second, we study PnP-CF with only deterministic component  ($\rvepsilon_d$). 
Finally, we consider our full approach, PnP-CF with time-adaptive pullback strategy, which combines both stochastic and deterministic components in an adaptive manner 
within the consensus framework.

Table \ref{tab:variant_results_by_task} reports the performance across different variants of our method.
Complementing these results, figure \ref{fig:ablation_appendix_celeba_cats} provides a qualitative comparison on CelebA and AFHQ-Cat across the six inverse problems considered in our experiments.

Figure \ref{fig:dual_analysis} presents the PSNR, SSIM and LPIPS  curves for different inverse tasks, comparing the settings with and without dual variable across optimization iterations. 

\begin{figure}[h]
    \centering
    \includegraphics[width=\linewidth]{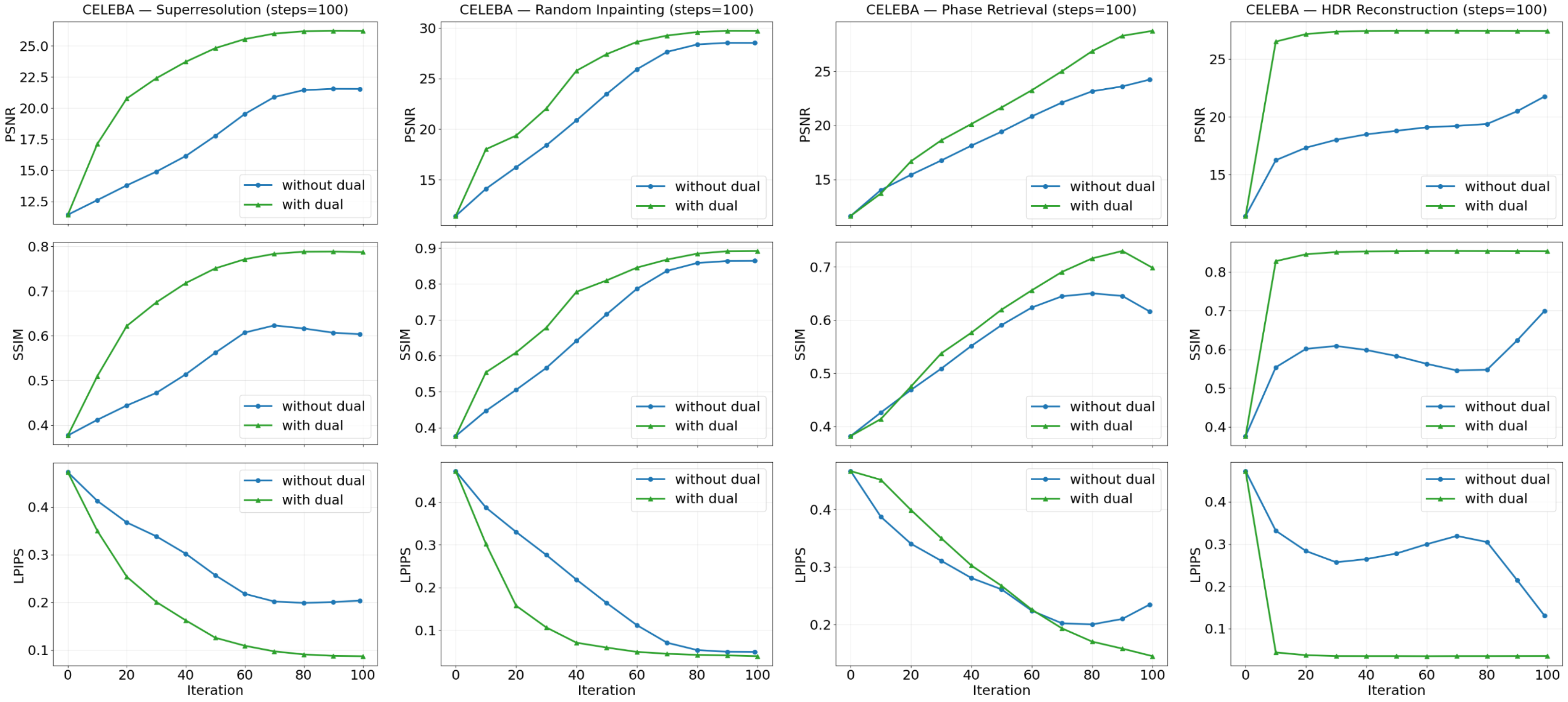}
\caption{Effect of the dual-variable update on reconstruction quality across optimization iterations. PSNR (top), SSIM (middle), and LPIPS (bottom) are shown for super-resolution, random inpainting, phase retrieval, and HDR reconstruction on CelebA, comparing PnP-CF `with' and `without' the dual-variable update. Across tasks, the dual-variable formulation generally improves convergence speed and final reconstruction quality, with particularly large gains for super-resolution, random inpainting, and HDR reconstruction.}
\label{fig:dual_analysis}
\end{figure}

\begin{table}[ht]
\centering
\caption{Task-wise performance of the three PnP-CF pullback variants
(time-adaptive, stochastic, deterministic) across all six inverse tasks. Best
result per task and metric, among the three variants, is shown in \textbf{bold}.}
\label{tab:variant_results_by_task}
\scriptsize
\resizebox{\textwidth}{!}
{%
\begin{tabular}{ll ccc ccc}
\toprule
& & \multicolumn{3}{c}{CelebA} & \multicolumn{3}{c}{AFHQ-Cat} \\
\cmidrule(lr){3-5} \cmidrule(lr){6-8}
Task & Method & PSNR$\uparrow$ & LPIPS$\downarrow$ & SSIM$\uparrow$ & PSNR$\uparrow$ & LPIPS$\downarrow$ & SSIM$\uparrow$ \\
\midrule
\multirow{3}{*}{HDR Reconstruction}
& PnP-CF (adaptive)      & \best{27.59} & \best{0.04} & \best{0.85} & \best{26.29} & \best{0.08} & \best{0.86} \\
& PnP-CF (stochastic)    & 22.44 & 0.10 & 0.80 & 22.97 & 0.32 & 0.71 \\
& PnP-CF (deterministic) & 18.12 & 0.36 & 0.43 & 24.11 & 0.13 & 0.75 \\
\midrule
\multirow{3}{*}{Box Inpainting}
& PnP-CF (adaptive)      & \best{26.36} & \best{0.05} & \best{0.89} & \best{22.65} & \best{0.13} & \best{0.82} \\
& PnP-CF (stochastic)    & 22.79 & 0.18 & 0.74 & 21.24 & 0.37 & 0.68 \\
& PnP-CF (deterministic) & 22.50 & 0.10 & 0.72 & 20.65 & 0.21 & 0.66 \\
\midrule
\multirow{3}{*}{Nonlinear Deblurring}
& PnP-CF (adaptive)      & \best{27.09} & \best{0.09} & \best{0.79} & \best{22.28} & \best{0.29} & \best{0.58} \\
& PnP-CF (stochastic)    & 25.71 & 0.15 & 0.77 & 22.27 & 0.32 & 0.55 \\
& PnP-CF (deterministic) & 13.46 & 0.68 & 0.24 & 20.01 & 0.58 & 0.35 \\
\midrule
\multirow{3}{*}{Phase Retrieval}
& PnP-CF (adaptive)      & \best{30.33} & \best{0.11} & \best{0.80} & \best{25.27} & \best{0.22} & \best{0.70} \\
& PnP-CF (stochastic)    & 23.72 & 0.21 & 0.70 & 20.22 & 0.47 & 0.57 \\
& PnP-CF (deterministic) & 23.67 & 0.26 & 0.61 & 15.82 & 0.56 & 0.34 \\
\midrule
\multirow{3}{*}{Random Inpainting}
& PnP-CF (adaptive)      & \best{29.93} & \best{0.04} & \best{0.89} & \best{30.72} & \best{0.07} & \best{0.85} \\
& PnP-CF (stochastic)    & 27.50 & 0.12 & 0.83 & 28.25 & 0.29 & 0.78 \\
& PnP-CF (deterministic) & 21.93 & 0.25 & 0.59 & 22.01 & 0.43 & 0.48 \\
\midrule
\multirow{3}{*}{Super Resolution}
& PnP-CF (adaptive)      & \best{26.45} & \best{0.08} & \best{0.79} & \best{23.72} & \best{0.27} & \best{0.66} \\
& PnP-CF (stochastic)    & 23.38 & 0.19 & 0.72 & 19.70 & 0.48 & 0.57 \\
& PnP-CF (deterministic) & 18.17 & 0.34 & 0.42 & 14.05 & 0.68 & 0.27 \\
\bottomrule
\end{tabular}}
\end{table}

\begin{figure}[h]
    \centering
    \begin{minipage}[h]{0.48\linewidth}
        \centering
        \includegraphics[
            width=\linewidth,
            height=\textheight,
            keepaspectratio
        ]{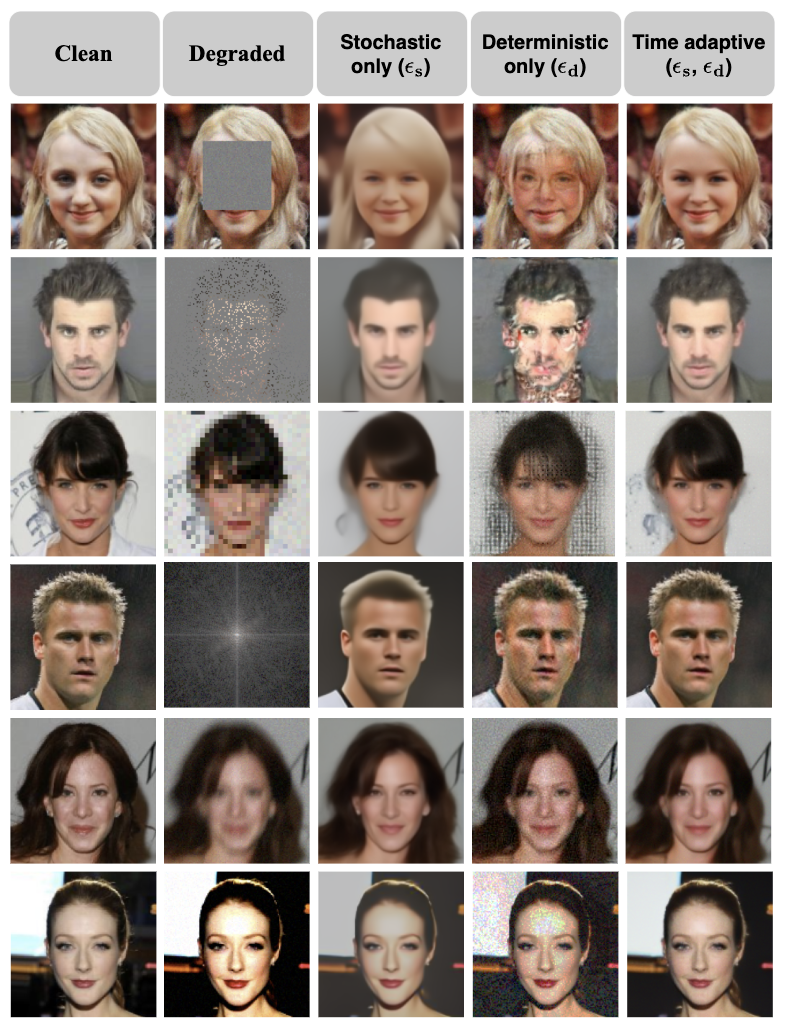}
    \end{minipage}
    \hfill
    \begin{minipage}[h]{0.48\linewidth}
        \centering
        \includegraphics[
            width=\linewidth,
            height=\textheight,
            keepaspectratio
        ]{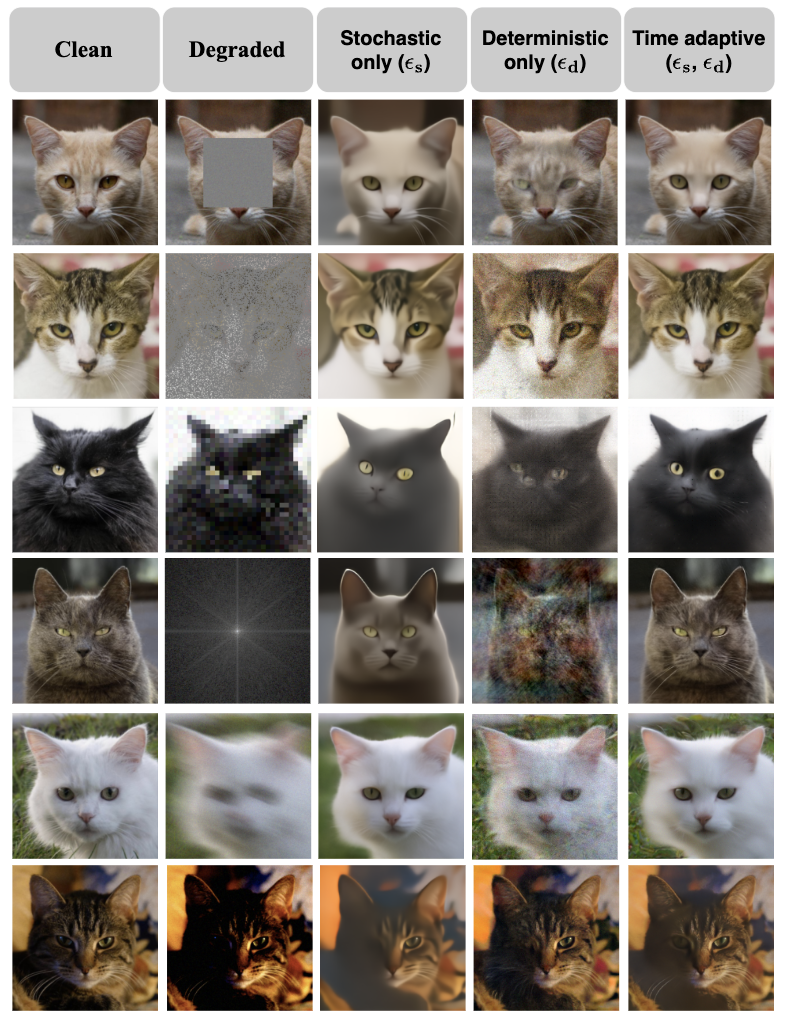}
    \end{minipage}
    \caption{Additional qualitative ablation results comparing the pullback variants of PnP-CF on CelebA (left) and AFHQ-Cat (right).
    Within each dataset, columns from left to right show the clean image, degraded observation, stochastic-only ($\rvepsilon_s$), deterministic-only ($\rvepsilon_d$), and time-adaptive ($\rvepsilon_s,\rvepsilon_d$) variants. 
    From top to bottom, the rows correspond to 
    box inpainting, random inpainting, super-resolution, phase retrieval, nonlinear deblurring, and HDR reconstruction.}
    \label{fig:ablation_appendix_celeba_cats}
\end{figure}

\section{Additional Experiments}
\label{app:additonal}

\subsection{Segmentation guidance failure cases}
\label{app:seg_failures}

Although PnP-CF performs well overall for segmentation-guided reconstruction, as shown in the results in sections \ref{segmentation} and appendix \ref{appendix:segmentation_additional}, we observe occasional failure cases that reveal an important limitation of using a learned segmentation network as the forward operator.

The reconstruction is optimized to reproduce the target segmentation under a fixed segmentator. Consequently, the optimization may exploit imperfections or ambiguities in the learned forward model and produce image structures that induce the desired segmentation output without corresponding to a perceptually plausible face. 
In such cases, high segmentation consistency does not necessarily translate into high reconstruction quality.
Figure \ref{fig:seg_failures} illustrates representative examples. 
Although the predicted segmentation boundaries closely follow the target facial regions, the reconstructed images can exhibit distorted facial geometry, unnatural local structures, or other perceptual artifacts. Interestingly, some of these artifacts appear aligned with semantic boundaries emphasized by the segmentator, suggesting that the reconstruction can satisfy the learned forward model while deviating from the natural image manifold.

These examples highlight a broader challenge when using highly non-injective learned neural networks as forward operators.
Improving the robustness of the interaction between such learned forward operators and generative priors is therefore an important direction for future work.

\begin{figure}[h]
    \centering
    \includegraphics[width=0.4\linewidth]{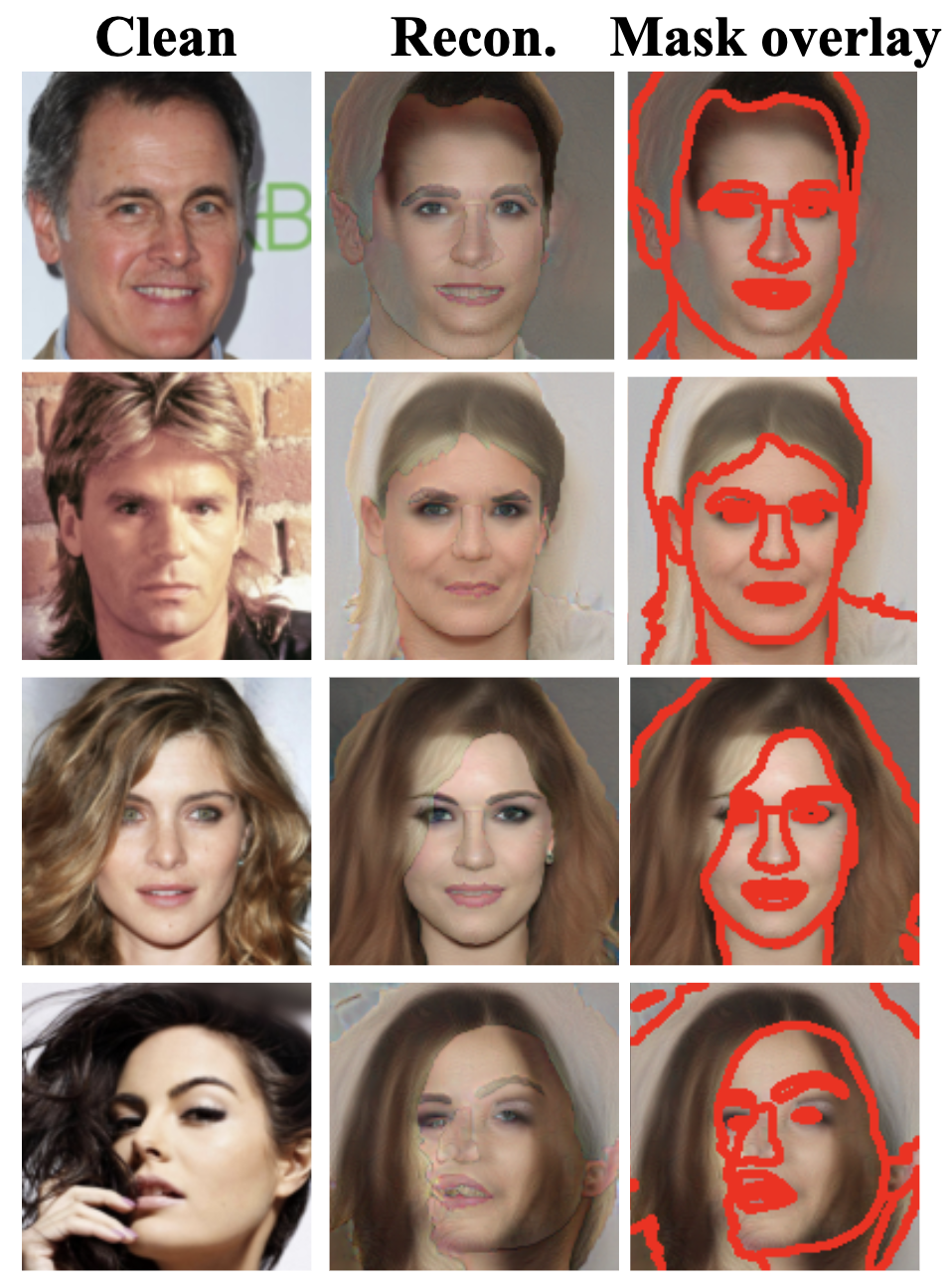}
    \caption{Failure cases for segmentation-guided reconstruction.
    Left: reference images. Middle: PnP-CF reconstructions. Right: reconstructed images with the predicted segmentation boundaries overlaid in red. Despite achieving strong segmentation consistency, some reconstructions exhibit distorted facial geometry and perceptual artifacts. These examples illustrate that satisfying a learned segmentation forward model does not necessarily guarantee a visually plausible reconstruction.}
\label{fig:seg_failures}
\end{figure}

\subsection{Experiments on default hyperparameter settings}
\label{app:default_params}

Table \ref{tab:default_params} report results of PnP-CF run with the default (non-scheduled) hyperparameter settings: a constant penalty $\rho_k = \rho$ (i.e., $\beta=0$) and a linear time schedule $t_k = k/K$ (i.e., $\gamma=1$), across all six inverse problems on both datasets. 
For CelebA, we use $K=200$ iterations and $\rho=1000$; for AFHQ-Cat, we use $K=250$ iterations and $\rho=500$. 
Both datasets use $n=5$ denoiser calls per iteration.  
All other settings are held fixed across the six tasks within a dataset rather than tuned per task.
For the linear inverse problems, we use the more challenging degradations described in section \ref{experiments}, so the results are not directly comparable to those reported by \citet{kumarpnpadmm}.

\begin{table}[ht]
\centering
\scriptsize
\setlength{\tabcolsep}{3pt}
\caption{PnP-CF results on reconstructed vs. noisy/degraded inputs with default (non-scheduled) hyperparameter settings on CelebA and AFHQ-Cat.}
\label{tab:default_params}
\begin{tabular}{ll cc cc cc}
\toprule
& & \multicolumn{2}{c}{PSNR$\uparrow$} & \multicolumn{2}{c}{SSIM$\uparrow$} & \multicolumn{2}{c}{LPIPS$\downarrow$} \\
\cmidrule(lr){3-4} \cmidrule(lr){5-6} \cmidrule(lr){7-8}
Dataset & Task & Rec & Noisy & Rec & Noisy & Rec & Noisy \\
\midrule
\multirow{6}{*}{CelebA}
& Box Inpainting               & 24.71 & 18.91 & 0.76 & 0.66 & 0.08 & 0.31 \\
& Random Inpainting         & 30.15 & 11.46 & 0.90 & 0.18 & 0.04 & 1.09 \\
& Super Resolution          & 20.28 & 10.71 & 0.56 & 0.18 & 0.24 & 0.82 \\
& HDR Reconstruction      & 21.97 & 17.03 & 0.79 & 0.50 & 0.06 & 0.11 \\
& Nonlinear Deblurring      & 25.36 & 24.77 & 0.73 & 0.66 & 0.15 & 0.24 \\
& Phase Retrieval           & 27.35 &  9.12 & 0.68 & 0.21 & 0.20 & 1.24 \\
\midrule
\multirow{6}{*}{AFHQ-Cat}
& Box Inpainting               & 21.48 & 18.31 & 0.69 & 0.64 & 0.20 & 0.35 \\
& Random Inpainting         & 30.61 & 12.41 & 0.85 & 0.20 & 0.07 & 1.13 \\
& Super Resolution          & 18.35 & 11.67 & 0.49 & 0.31 & 0.44 & 0.95 \\
& HDR Reconstruction      & 26.21 & 16.60 & 0.83 & 0.49 & 0.09 & 0.17 \\
& Nonlinear Deblurring      & 22.01 & 20.28 & 0.54 & 0.45 & 0.36 & 0.54 \\
& Phase Retrieval           & 15.06 &  9.70 & 0.37 & 0.31 & 0.52 & 1.36 \\
\bottomrule
\end{tabular}
\end{table}

\subsection{Experiment on baseline settings}
\label{appendix:baselines}

For our three linear inverse problems, the settings used for the results in the main paper (section \ref{experiments}) are substantially more challenging than those considered in \citet{martin2025pnpflow}. We deliberately adopt these harder settings to more clearly distinguish the performance of different methods and demonstrate the robustness of our approach under more severe degradations.

We therefore also evaluate all methods under the milder settings used by \citet{martin2025pnpflow} and report the results in table~\ref{tab:milder_settings}. Specifically, we consider box inpainting with a centered $40\times40$ (CelebA) / $80\times80$ (AFHQ-Cat) mask, random inpainting with $70\%$ of pixels masked, and super-resolution with downsampling factors of $\times2$ (CelebA) / $\times4$ (AFHQ-Cat), while keeping the noise levels unchanged.

The results show that our method remains competitive with the baselines on linear inverse problems under these milder degradation settings.

\begin{table}[ht]
\centering
\caption{Reconstruction quality on CelebA and AFHQ-Cat for linear inverse problems under the milder degradation settings used in \citet{martin2025pnpflow}. Best results are shown in \textbf{bold}.}
\label{tab:milder_settings}
\scriptsize
\setlength{\tabcolsep}{4pt}
\begin{tabular}{l l ccc ccc ccc}
\toprule
& & \multicolumn{3}{c}{Box inpainting} & \multicolumn{3}{c}{Super-resolution} & \multicolumn{3}{c}{Random inpainting} \\
\cmidrule(lr){3-5} \cmidrule(lr){6-8} \cmidrule(lr){9-11}
Dataset & Method & PSNR$\uparrow$ & LPIPS$\downarrow$ & SSIM$\uparrow$ & PSNR$\uparrow$ & LPIPS$\downarrow$ & SSIM$\uparrow$ & PSNR$\uparrow$ & LPIPS$\downarrow$ & SSIM$\uparrow$ \\
\midrule
\multirow{7}{*}{CelebA}
& PnP-CF (ours) & \textbf{31.71} & \textbf{0.02} & \textbf{0.95} & 31.37 & \textbf{0.03} & 0.86 & \textbf{33.99} & \textbf{0.02} & \textbf{0.95} \\
& PnP-Flow      & 30.82 & 0.04 & 0.94 & 31.31 & 0.06 & 0.91 & 33.92 & \textbf{0.02} & \textbf{0.95} \\
& FLOWER        & 31.66 & \textbf{0.02} & \textbf{0.95} & \textbf{32.88} & 0.04 & 0.90 & 33.91 & 0.03 & \textbf{0.95} \\
& FlowDPS       & 28.48 & 0.06 & 0.89 & 25.94 & 0.12 & 0.79 & 26.42 & 0.11 & 0.80 \\
& D-Flow        & 28.28 & 0.06 & 0.85 & 29.40 & 0.04 & 0.83 & 31.96 & 0.03 & 0.91 \\
& Flow Priors   & 27.07 & 0.05 & 0.88 & 28.11 & 0.10 & 0.73 & 29.06 & 0.04 & 0.91 \\
& OT-ODE        & 29.49 & 0.04 & 0.92 & 31.14 & \textbf{0.03} & \textbf{0.93} & 28.68 & 0.05 & 0.87 \\
\midrule
\multirow{7}{*}{AFHQ-Cat}
& PnP-CF (ours) & \textbf{27.86} & \textbf{0.06} & 0.90 & \textbf{28.54} & 0.14 & \textbf{0.78} & 33.39 & \textbf{0.03} & \textbf{0.91} \\
& PnP-Flow      & 27.08 & 0.13 & 0.90 & 28.01 & 0.17 & 0.77 & \textbf{34.11} & 0.05 & \textbf{0.91} \\
& FLOWER        & 27.23 & 0.07 & \textbf{0.92} & 27.06 & 0.27 & 0.77 & 34.10 & 0.05 & 0.90 \\
& FlowDPS       & 24.19 & 0.19 & 0.83 & 26.09 & 0.28 & 0.74 & 27.43 & 0.25 & 0.76 \\
& D-Flow        & 26.35 & 0.14 & 0.77 & 25.14 & 0.16 & 0.62 & 31.38 & 0.09 & 0.85 \\
& Flow Priors   & 24.79 & 0.11 & 0.84 & 25.13 & 0.27 & 0.58 & 32.06 & 0.05 & 0.90 \\
& OT-ODE        & 24.47 & 0.10 & 0.87 & 26.09 & \textbf{0.12} & 0.71 & 30.05 & 0.09 & 0.85 \\

\bottomrule
\end{tabular}
\end{table}

%% file: references.bib
@inproceedings{venkatakrishnan2013plug,
  title={Plug-and-play priors for model based reconstruction},
  author={Venkatakrishnan, Singanallur V and Bouman, Charles A and Wohlberg, Brendt},
  booktitle={2013 IEEE Global Conference on Signal and Information Processing},
  year={2013},
  organization={IEEE}
}

@inproceedings{ryu2019plugandplay,
      title={Plug-and-Play Methods Provably Converge with Properly Trained Denoisers}, 
      author={Ernest K. Ryu and Jialin Liu and Sicheng Wang and Xiaohan Chen and Zhangyang Wang and Wotao Yin},
      booktitle={International Conference on Machine Learning},
      eprint={arXiv:1905.05406},
      year={2019},
}

@article{Zhang_2017,
   title={Beyond a Gaussian Denoiser: Residual Learning of Deep CNN for Image Denoising},
   journal={IEEE Transactions on Image Processing},
   publisher={Institute of Electrical and Electronics Engineers},
   author={Zhang, Kai and Zuo, Wangmeng and Chen, Yunjin and Meng, Deyu and Zhang, Lei},
   year={2017},
}

@inproceedings{bora2017compressed,
      title={Compressed Sensing using Generative Models}, 
      author={Ashish Bora and Ajil Jalal and Eric Price and Alexandros G. Dimakis},
      booktitle={International Conference on Machine Learning},
      year={2017},
      eprint={arXiv:1703.03208}
}

@inproceedings{zhu2023denoising,
      title={Denoising Diffusion Models for Plug-and-Play Image Restoration}, 
      author={Yuanzhi Zhu and Kai Zhang and Jingyun Liang and Jiezhang Cao and Bihan Wen and Radu Timofte and Luc Van Gool},
      booktitle={IEEE/CVF Conference on Computer Vision and Pattern Recognition},
      year={2023},
      eprint={arXiv:2305.08995}
}

@inproceedings{mardani2023variational,
      title={A Variational Perspective on Solving Inverse Problems with Diffusion Models}, 
      author={Morteza Mardani and Jiaming Song and Jan Kautz and Arash Vahdat},
      booktitle={International Conference on Learning Representations},
      year={2024},
      eprint={arXiv:2305.04391}
}

@article{li2025decoupled,
      title={Decoupled Data Consistency with Diffusion Purification for Image Restoration}, 
      author={Xiang Li and Soo Min Kwon and Shijun Liang and Ismail R. Alkhouri and Saiprasad Ravishankar and Qing Qu},
      journal={arXiv:2403.06054},
      year={2025},
}

@inproceedings{martin2025pnpflow,
      title={PnP-Flow: Plug-and-Play Image Restoration with Flow Matching}, 
      author={Ségolène Martin and Anne Gagneux and Paul Hagemann and Gabriele Steidl},
      booktitle={International Conference on Learning Representations},
      year={2025},
      eprint={arXiv:2410.02423}
}

@inproceedings{pourya2026flower,
      title={{FLOWER}: A Flow-Matching Solver for Inverse Problems},
      author={Mehrsa Pourya and Bassam El Rawas and Michael Unser},
      booktitle={International Conference on Learning Representations},
      year={2026}
}

@inproceedings{chung2024diffusion,
      title={Diffusion Posterior Sampling for General Noisy Inverse Problems}, 
      author={Hyungjin Chung and Jeongsol Kim and Michael T. Mccann and Marc L. Klasky and Jong Chul Ye},
      year={2023},
      booktitle={International Conference on Learning Representations},
      eprint={arXiv:2209.14687}
}

@inproceedings{lipman2023flowmatching,
      title={Flow Matching for Generative Modeling}, 
      author={Yaron Lipman and Ricky T. Q. Chen and Heli Ben-Hamu and Maximilian Nickel and Matt Le},
      booktitle={International Conference on Learning Representations},
      year={2023},
      eprint={arXiv:2210.02747}
}

@article{boyd_distributed_2010,
	title = {Distributed {Optimization} and {Statistical} {Learning} via the {Alternating} {Direction} {Method} of {Multipliers}},
	journal = {Foundations and Trends in Machine Learning},
	author = {Boyd, Stephen and Parikh, Neal and Chu, Eric and Peleato, Borja and Eckstein, Jonathan},
	year = {2010}
}

@inproceedings{yang2015facialparts,
      title={From Facial Parts Responses to Face Detection: A Deep Learning Approach}, 
      author={Shuo Yang and Ping Luo and Chen Change Loy and Xiaoou Tang},
      booktitle={IEEE International Conference on Computer Vision},
      year={2015},
      eprint={arXiv:1509.06451}
}

@inproceedings{choi2020starganv,
      title={StarGAN v2: Diverse Image Synthesis for Multiple Domains}, 
      author={Yunjey Choi and Youngjung Uh and Jaejun Yoo and Jung-Woo Ha},
      year={2020},
      booktitle={IEEE/CVF Conference on Computer Vision and Pattern Recognition},
      eprint={arXiv:1912.01865}
}

@inproceedings{pokle2024training,
      title={Training-free Linear Image Inverses via Flows}, 
      author={Ashwini Pokle and Matthew J. Muckley and Ricky T. Q. Chen and Brian Karrer},
      booktitle={International Conference on Learning Representations},
      year={2024},
      eprint={arXiv:2310.04432}
}

@inproceedings{benhamu2024dflow,
      title={D-Flow: Differentiating through Flows for Controlled Generation}, 
      author={Heli Ben-Hamu and Omri Puny and Itai Gat and Brian Karrer and Uriel Singer and Yaron Lipman},
      booktitle={International Conference on Learning Representations},
      year={2024},
      eprint={arXiv:2402.14017}
}

@inproceedings{zhang2025flowpriors,
      title={Flow Priors for Linear Inverse Problems via Iterative Corrupted Trajectory Matching}, 
      author={Yasi Zhang and Peiyu Yu and Yaxuan Zhu and Yingshan Chang and Feng Gao and Ying Nian Wu and Oscar Leong},
      booktitle={Advances in Neural Information Processing Systems},
      year={2024},
      eprint={arXiv:2405.18816}
}

@article{hestenes1952methods,
  author = {Hestenes, Magnus R and Stiefel, Eduard},
  journal = {Journal of research of the National Bureau of Standards},
  title = {Methods of conjugate gradients for solving linear systems},
  year = 1952
}

@inproceedings{shrestha2026,
      title={Taming Score-Based Denoisers in ADMM: A Convergent Plug-and-Play Framework}, 
      author={Rajesh Shrestha and Xiao Fu},
      booktitle={International Conference on Learning Representations},
      eprint={arXiv:2603.10281},
      year={2026},
}

@inproceedings{liu2022flowstraight,
      title={Flow Straight and Fast: Learning to Generate and Transfer Data with Rectified Flow}, 
      author={Xingchao Liu and Chengyue Gong and Qiang Liu},
      booktitle={International Conference on Learning Representations},
      eprint={arXiv:2209.03003},
      year={2023},
}

@book{boyd2004convex,
  title={Convex Optimization},
  author={Boyd, Stephen and Vandenberghe, Lieven},
  publisher={Cambridge University Press},
  year={2004},
}

@inproceedings{tran2021blurencoding,
  author    = {Tran, Phong and Tran, Anh Tuan and Phung, Quynh and Hoai, Minh},
  title     = {Explore Image Deblurring via Encoded Blur Kernel Space},
  booktitle = {Proceedings of the IEEE/CVF Conference on Computer Vision and Pattern Recognition (CVPR)},
  year      = {2021},
}

@inproceedings{karras2018progressive,
  title={Progressive Growing of {GAN}s for Improved Quality, Stability, and Variation},
  author={Karras, Tero and Aila, Timo and Laine, Samuli and Lehtinen, Jaakko},
  booktitle={International Conference on Learning Representations},
  year={2018}
}

@inproceedings{lee2020maskgan,
  title={MaskGAN: Towards Diverse and Interactive Facial Image Manipulation},
  author={Lee, Cheng-Han and Liu, Ziwei and Wu, Lingyun and Luo, Ping},
  booktitle={IEEE Conference on Computer Vision and Pattern Recognition (CVPR)},
  year={2020}
}

@inproceedings{kim2025flowdps,
  title={{FlowDPS}: Flow-Driven Posterior Sampling for Inverse Problems},
  author={Kim, Jeongsol and Kim, Bryan Sangwoo and Ye, Jong Chul},
  booktitle={Proceedings of the IEEE/CVF International Conference on Computer Vision (ICCV)},
  pages={12328--12337},
  year={2025}
}

@article{ParikhBoyd2014,
  title={Proximal algorithms},
  author={Parikh, Neal and Boyd, Stephen},
  journal={Foundations and Trends in optimization},
  volume={1},
  number={3},
  pages={127--239},
  year={2014},
  publisher={Emerald Publishing Limited}
}

@article{saharia2022image,
  title={Image super-resolution via iterative refinement},
  author={Saharia, Chitwan and Ho, Jonathan and Chan, William and Salimans, Tim and Fleet, David J and Norouzi, Mohammad},
  journal={IEEE transactions on pattern analysis and machine intelligence},
  volume={45},
  number={4},
  pages={4713--4726},
  year={2022},
  publisher={IEEE}
}

@article{batzolis2021conditional,
  title={Conditional image generation with score-based diffusion models},
  author={Batzolis, Georgios and Stanczuk, Jan and Sch{\"o}nlieb, Carola-Bibiane and Etmann, Christian},
  journal={arXiv preprint arXiv:2111.13606},
  year={2021}
}

@inproceedings{ronneberger2015u,
  title={U-net: Convolutional networks for biomedical image segmentation},
  author={Ronneberger, Olaf and Fischer, Philipp and Brox, Thomas},
  booktitle={International Conference on Medical image computing and computer-assisted intervention},
  year={2015},
  organization={Springer}
}

@article{armijo1966minimization,
  title={Minimization of functions having Lipschitz continuous first partial derivatives},
  author={Armijo, Larry},
  journal={Pacific Journal of mathematics},
  volume={16},
  number={1},
  pages={1--3},
  year={1966},
  publisher={Mathematical Sciences Publishers}
}

@inproceedings{kumarpnpadmm,
  title={Plug-and-Play ADMM for Inverse Problems with Flow Matching Denoiser},
  author={Kumar, Dibyanshu and Gregorová, Magda},
  booktitle={Advances in Neural Information Processing Systems},
  year={2026}
}
